\newtheorem{assumption}{Assumption}
\newtheorem{theorem}{Theorem}
\newtheorem{lemma}{Lemma}
\newtheorem{corollary}{Corollary}
\newcommand{\bX}{\mathbf{X}}
\newcommand{\bY}{\mathbf{Y}}
\newcommand{\bB}{\mathbf{B}}
\newcommand{\bbE}{\mathbb{E}}
\newcommand{\bE}{\mathbf{E}}
\newcommand{\bG}{\mathbf{G}}
\newcommand{\bU}{\mathbf{U}}
\newcommand{\bbR}{\mathbb{R}}
\newcommand{\cT}{\mathcal{T}}
\newcommand{\cM}{\mathcal{M}}
\newcommand{\br}{\mathbf{r}}
\newcommand{\cF}{\mathcal{F}}
\newcommand{\bPhi}{\boldsymbol{\Phi}}
\newcommand{\chen}[1]{{\color{green}{\bf{Chen says:}} \emph{#1}}}
\begin{document}


\title{Functional-Edged Network Modeling}
\author{Haijie Xu$^{1}$,  Chen Zhang$^{1}$
\thanks{$^{1}$Tsinghua University, Beijing, China}}

\maketitle

\begin{abstract}
Contrasts with existing works which all consider nodes as functions and use edges to represent the relationships between different functions. We target at network modeling whose edges are functional data and transform the adjacency matrix into a functional adjacency tensor, introducing an additional dimension dedicated to function representation.
Tucker functional decomposition is used for the functional adjacency tensor, and 
to further consider the community between nodes, we regularize the basis matrices to be symmetrical. Furthermore, to deal with irregular observations of the functional edges, we conduct model inference to solve a tensor completion problem. It is optimized by a Riemann conjugate gradient descent method. Besides these, we also derive several theorems to show the desirable properties of the functional edged network model. Finally, we evaluate the efficacy of our proposed model using simulation data and real metro system data from Hong Kong and Singapore.
\end{abstract}

\begin{IEEEkeywords}
    Functional tensor, Network modeling, Tucker decomposition, Community detection, Riemann optimization
\end{IEEEkeywords}


\section{INTRODUCTION}
\label{sec:intro}
Numerous real-world applications produce data that signify interactions between pairs of entities, lending themselves to a natural interpretation as nodes and weighted edges in a network. For instance, in social platforms, individual users can be represented as nodes, with the communication between two users being depicted as edges. In the Internet of Things (IoT), each device can be considered as a node, while the data transmission between two devices can be represented as an edge. 
In urban transportation, each station can be represented as a node, and the passenger flow between stations can be regarded as an edge.

Though many works have been developed for network modeling, they all focus on scalar or vector edges.  In practice, many scenarios involve edges that take on the form of functions within a particular range or domain.  For example, in IoT, signal transmissions between devices are often transformed into the frequency domain, resulting in the formulation of spectral densities. In transportation, the real-time passenger flow between different stations can be continuously observed and formulate a function over time for each day. As illustrated in the left part of Figure \ref{fig:HongKong example}, it shows the passenger flows between four different metro stations in Hong Kong. 

In practice, one approach to address network data with functional edges would be to first sample each function at a grid of points, $t_{1}, \ldots, t_{L}$ and estimate $L$ networks. This could be achieved by separately modeling $L$ networks using existing methods, such as graph embedding \cite{ou2016asymmetric,goyal2018graph,gallagher2023spectral}. For example, matrix factorization-based embedding computes the decomposition of the graph adjacency matrix. To further consider the network dependence between different points, dynamic network models \cite{kazemi2020representation} can be applied as well. 
However, a significant drawback common to these approaches is their requirement that all random functions be sampled at a uniform set of grid points. In practical scenarios, observations of curves often occur at different sets of points, i.e. irregular points, which can hinder the effectiveness of these methods.
Another approach is to use nonparametric smoothing to estimate the functional edges first, and then make slices on regular observation points. However, this smoothing only utilizes data of the same function, potentially overlooking the influence between different edges based on the network structure. This paper takes a different approach by directly treating functional edges as a three-dimensional functional tensor, where the third dimension represents continuous functions defined on a specific domain. We propose a novel functional tensor decomposition for modeling.

\begin{figure*}
  \centering
  \includegraphics[width = 0.9\linewidth]{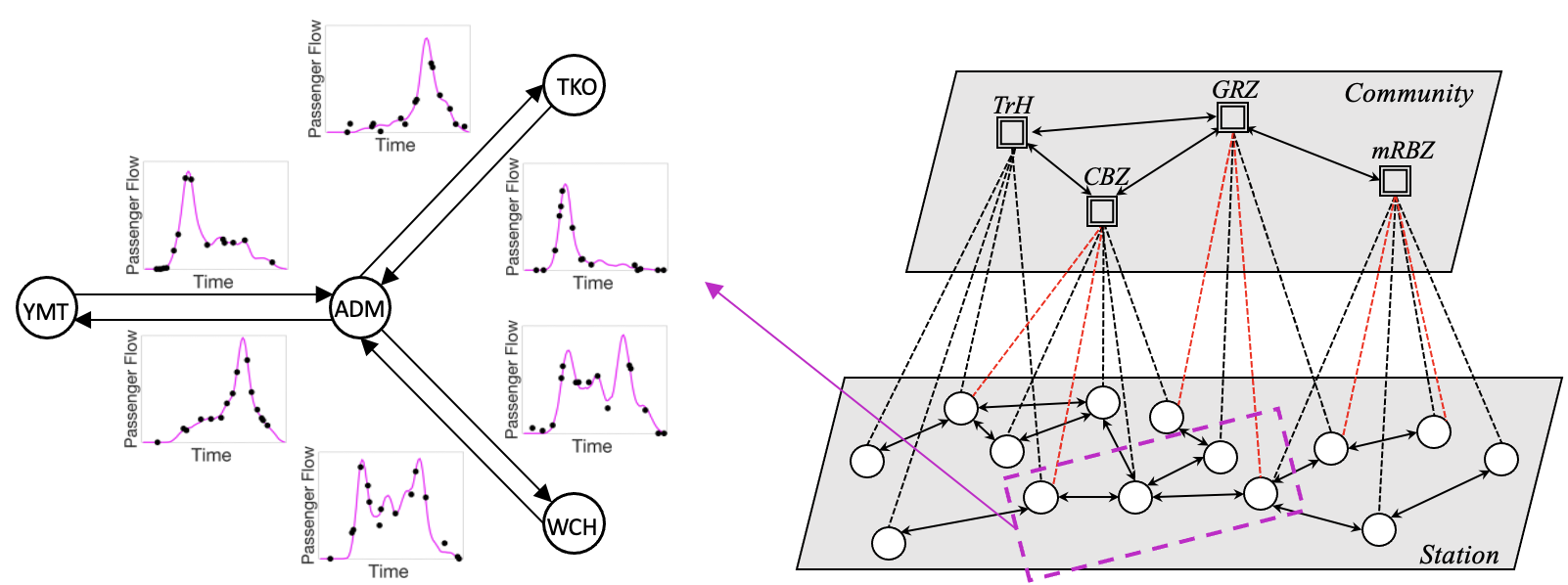}
  \caption{
  Partial passenger flow and community structure of Hong Kong metro system. Left: Passenger flow function (pink line) and its irregular observations (black dots) between four stations shown in the right pink box. Right: Four main communities of the Hong Kong metro system: traffic hub (TrH), central business zone (CBZ),  general residential zone (GRZ) and mixed residential-business zone (mRBZ).
    }
  \label{fig:HongKong example}
\end{figure*}
Furthermore, many large-scale networks exhibit a community structure, characterized by clusters of nodes where edges within each community are dense, while edges connecting different communities are sparse. The community structure of Hong Kong metro system is shown in the right part of  Figure \ref{fig:HongKong example}. Community detection is commonly integrated into network modeling \cite{zhong2015measuring, xu2016network}. Discovering the underlying community structure not only facilitates improving data analysis techniques such as clustering but also allows for a better understanding of the network's overall structure, thereby enhancing the model's interpretability.

Though there are emerging works for functional network modeling, all of them treat node data as functions, and the probabilistic edges describe the dependence structure of nodes. So far to our best knowledge, there is no work dealing with functional-edged network. To fill this research gap, we proposed a so-called Functional Edged Network (FEN) model based on a novel functional Tucker decomposition, which can on the one hand extract features for high-dimensional adjacency functional tensor with irregular observation points, and on the other can describe the community structure of the network by symmetrical decomposition.
To efficiently estimate the FEN model, we propose a Riemann conjugate gradient descent optimization approach. Furthermore, we also discuss the theoretical properties of the FEN model.

The remainder of this paper is organized as follows. Section \ref{sec:literature} reviews the literature on related topics to our problem. Section \ref{sec:model} describes our specific problem formulation and gives its theoretical properties. Section \ref{sec:optimization} introduces our model estimation algorithm in detail. 
Section \ref{sec:simulation} presents thorough numerical studies based on simulated data. Section \ref{sec:case} applies our proposed FEN to the real-world case study to further illustrate its efficacy and applicability. Some conclusive remarks are given in Section \ref{sec:conlusion}.

\section{LITERATURE REVIEW}
\label{sec:literature}

\subsection{High Dimensional Functional Data Modeling}
If we neglect the spatial structure of the network, we can regard the $m(m-1)$ functional edges as multivariate functions. In such cases, modeling methods based on principal component analysis (PCA) are commonly employed.

In particular, for functional data with regular observation points, traditional PCA can be revised and applied for modeling. Specifically, \cite{VPCA} proposed a vectorized PCA (VPCA) algorithm by combining multiple functions into one long function data, and using classical PCA for decomposition. However, this brute vectorization neglects the correlation between different functions and hence loses a lot of estimation accuracy. An alternative method is presented by \cite{MFPCA}, known as multivariate functional PCA (MFPCA). This approach regarded these multiple functions as the different samples of one function and combined them as a function matrix to do PCA. Following it, \cite{FPCA_ZC} further assumed its functional scores are sparse, and could better deal with high-dimensional functions with diverse features. Besides these, there are also many other functional PCA (FPCA) methods that can describe specific data features. For instance, \cite{foutz2010research} smoothed the functional data before applying the FPCA to get the smooth functional shape. \cite{bali2011robust} adapted the projection-pursuit approach to the functional data setting to achieve the robust estimator for functional PCA. However, all of the above methods cannot be directly applied to irregularly observed functional data, unless some data alignment algorithms are preprocessed. Furthermore, these methods do not address the smoothness property of functional data as well. 
 
 To deal with irregularly observed functional data, \cite{SIFPCA} proposed a sparse irregular FPCA (SIFPCA)  based on kernel smoothing algorithm to estimate the covariance function with sparsely observed functions. Then conditional expectation algorithm is used to estimate the FPCA score. Later, \cite{SIFPCAImproved} further extended it to more strict settings, where observation points are not required to be fixed but can be treated as random variables with a given distribution. Besides these methods, the mixed effect model is also used to estimate the sparse FPCA \cite{rice2001nonparametric}, but there is a problem that the high-dimensional covariance matrix may be numerically unstable \cite{FPCAReview}. To solve this problem, \cite{james2000principal} and \cite{james2003clustering} proposed the reduced rank model that avoided the above potential problems of the mixed effects model. Then \cite{zhou2008joint} used penalized splines to model the principal components of FPCA, cast the model into the mixed effect model and extended it for two-dimensional sparse irregular cases.

However,  all these methods can neither address network community structure nor be directly used for functional-edged network modeling.

\subsection{Functional Network Analysis}
Functional graphical models have garnered increasing research attention in recent years, as evidenced by works such as \cite{DynNet_MLEBasic, DynNet_DoubleFun, DynNet_BayesEdge}. These studies share a common approach, considering the node data as functions, while the edges are conceptualized as "probabilistic edges", symbolizing the relationships between different functions. The primary objective is to achieve accurate estimation of these edges.

Among different methods, the traditional Gaussian graphical model  \cite{chun2013joint,danaher2014joint} is the most popular one. Initially,\cite{DynNet_MLEBasic} treats an edge as a partial cross-correlation function between nodes. It first conducted FPCA on the data, and then calculated the precision matrix of PC scores of different nodes, as the adjacency matrix. 
Considering the edges are sparse, it assumes the probabilistic edge is fixed. Building upon this foundation, \cite{DynNet_DoubleFun} took a further step by considering that the probabilistic edges can also be functions, introducing the concept of the doubly functional graphical model. \cite{tsai2023latent} also considered functional data from multiple modes to be multimodal. It decomposes the multimodal data into different spaces and assumes the decomposition coefficients formulate a Gaussian graphical model. These above models assume Gaussian distribution on functional data. Besides, several nonparametric models are also proposed. \cite{li2018nonparametric} proposed a nonparametric graphical model based on the additive conditional dependence, while \cite{solea2022nonparametric} introduced an additive function-on-function regression model to describe the nonparametric partial correlation among different nodes. 

If all the functional edges are regularly observed at equally spaced grids, we can treat them as dynamic networks, for which many models have been proposed such as stochastic block model \cite{matias2017statistical}, exponential random graph model \cite{lee2020model}, latent factor model \cite{MTR}, latent feature relational model \cite{heaukulani2013dynamic} and latent space model (LSM) \cite{sewell2017latent}. Here we would like to address LSM, originally proposed by \cite{hoff2002latent}.
The core concept behind LSM lies in assigning each actor a vector in a low-dimensional latent space. The pairwise distances between these vectors, calculated using a specified similarity measure, determine the probabilities of connections within the network. LSM interprets these latent features as unmeasured characteristics of nodes, implying that nodes closer together in the latent space are more likely to establish connections. 
This interpretation naturally explains the presence of high levels of homophily and transitivity observed in real-world networks. Additionally, LSM can effectively capture and describe the community structure within the network by regarding each latent state as a community and estimating the likelihood of nodes belonging to each community.

In particular, \cite{sewell2017latent} introduced a community detection method within dynamic network data by LSM, by assuming the latent position of each node follows a Gaussian mixture model whose distribution can dynamically change over time. Recently, \cite{loyal2023eigenmodel} further proposed an LSM for the multilayer dynamic network, where a symmetric tri-matrix decomposition is developed to capture different community structures in different layers. 
Similarly, \cite{robinson2012detecting} proposed a method of discovering change points in network behavior via using a k-dimensional simplex latent space. The above methods build community structure well through LSM which can be considered in our model.

As previously mentioned, all the methods discussed above are graphical models, wherein nodes contain data, and edges represent "probabilistic" conditional dependencies between nodes.
In summary, although the goals of existing works are different from our paper, their modeling methods shed light upon our model. 


\subsection{Tensor Analysis For Networks}
For cases when all the functional edges are regularly obs at equally spaced grids, another way is to stack the adjacency matrices of all the sampling points as an adjacency tensor.  Subsequently, tensor decomposition algorithms, such as CANDECOMP/PARAFAC (CP) decomposition, tucker decomposition, etc, can be employed for modeling \cite{kazemi2020representation}. For extrapolation, 
\cite{papalexakis2012parcube} proposed a CP decomposition that introduced a non-negativity constraint to enhance interpretability. The decomposition results were used to find the most active nodes or time points. \cite{xiong2010temporal} proposed a probabilistic CP factorization. It also imposed a smoothness prior to the temporal vectors corresponding to using time as a regularizer. Additionally, \cite{yu2017link} presented another way of incorporating temporal dependencies into the embeddings with decomposition methods. It decomposed the adjacency matrices into both time-invariant and time-variant components and regularized the time-variant components to be projected onto feature space that ensures neighboring nodes have similar feature vectors.

In addition to the mentioned methods, several genetic tensor decomposition and completion techniques consider the incorporation of smoothing constraints. In particular, \cite{2DContinueDecomposition} proposed a decomposition algorithm for matrix decomposition, the two-dimensional tensor,  where one dimension represents continuous time. Similar to CP decomposition in tensors, this paper used two fixed basis sets, one is discrete and the other is continuous. Additionally,  \cite{CPDecSmooth} considered ``smoothness'' constraints as well as low-rank approximations and proposed an efficient algorithm for performing tensor completion that is particularly powerful regarding visual data.
Furthermore, \cite{han2023guaranteed} used the Reproducing Kernel Hilbert Space (RKHS) theory to deal with the smoothness of tensor. It proposed a functional CP decomposition method, where the basis functions of the continuous dimension belong to the RKHS. However, it did not consider the completion problem under irregular observations.
Meanwhile, \cite{wang2014low} proposed a novel spatially-temporally consistent tensor completion method for recovering the video missing data. This approach introduced a new smoothness regularization along the temporal dimension to leverage temporal information between consecutive video frames.
\cite{zhou2023partially} considered completing partially observed tensors in the presence of covariates through a regression approach. It performed a CP decomposition on the regression coefficients and added sparse and smooth constraints to the decomposed vectors to do the completion.
\cite{chen2021bayesian} integrated low-rank tensor factorization and vector autoregressive (VAR) process into a single probabilistic graphical model, to characterize both global and local consistencies in large-scale time series data. \cite{mcneil2021temporal} addressed graphs with nodes representing time series data and edges signifying static connection strengths between nodes.  It learned a low-rank, joint decomposition of the data via a combination of graph and time dictionaries. 

In summary, though the above methods cannot be directly used for modeling functional-edged networks with irregularly observed points,
the above methods motivate us to use tensor analysis to describe the functional network, which is also the innovation of our research.

\section{FEN MODEL}
\label{sec:model}
In this section, we present our FEN model in detail. Firstly, we introduce the notation system of this paper. Then we present the model formulation using Tucker decomposition which considers community structure. Next, we address irregular observation points by employing a mask tensor and formalizing smoothing constraints in the model inference framework to better capture the smoothness of functions. 

\subsection{Notation}
We use a lowercase letter $a$ to denote a scalar, a boldfaced lowercase letter $\mathbf{a}$ to denote a vector, a boldfaced uppercase letter $\mathbf{A}$ to denote a regular matrix or tensor, and a stylized uppercase letter $\mathcal{A}$ to denote a matrix or tensor that contains continuous dimensions.
For a three-dimensional tensor $\mathbf{A} \in \bbR^{d_1 \times d_2 \times d_3}$, we use $A_{i_1,i_2,i_3}$ to denote its $(i_1,i_2,i_3)$ element. If some dimensions of tensor $\mathcal{A}$ are continuous, we use parenthesized subscripts to indicate this. For example, $A_{i_1,i_2,(t)}$ denotes the $(i_1,i_2,(t))$ element of tensor $\mathcal{A}$ where the third dimension is continuous functional data in the domain $t\in \cT$. We use  $\mathcal{A}_{[i_1=j]}$ to denote a slice of tensor $\mathcal{A}$, where only fixed dimensions are in square brackets.   
For convenience, we can use the abbreviated notation $A_{i_1i_2i_3}$ instead of $A_{i_1,i_2,i_3}$ without ambiguity.
Then we can define the matricization of tensor, e.g. $\mathbf{A}_{(1)} \in \bbR^{d_1 \times d_2d_3}$ as follows,
\begin{equation*}
    (\mathbf{A}_{(1)})_{i_1,(i_2-1)d_3+ i_3}= A_{i_1i_2i_3},
\end{equation*}
where $i_1 = 1,\cdots,d_1, i_2 = 1,\cdots, d_2, i_3 = 1,\cdots, d_3$. Then $\mathbf{A}_{(2)}$, $\mathbf{A}_{(3)}$ and so on can be defined similarly.

The inner product of $\mathbf{A},\mathbf{B} \in \bbR^{d_1 \times d_2 \times d_3}$ is $\langle\mathbf{A}, \mathbf{B}\rangle = \sum_{i_1 = 1}^{d_1}\sum_{i_2 = 1}^{d_2}\sum_{i_3 = 1}^{d_3}A_{i_1i_2i_3}B_{i_1i_2i_3}$. The Frobenius norm (F norm) of tensor $\mathbf{A} \in \bbR^{d_1 \times d_2 \times d_3}$ can be defined as $||\mathbf{A}||_{F} = \sqrt{\langle \mathbf{A},\mathbf{A}\rangle} = \sqrt{\sum_{i_1 = 1}^{d_1}\sum_{i_2 = 1}^{d_2}\sum_{i_3 = 1}^{d_3}A_{i_1i_2i_3}^2}$. Similarly, if $\mathcal{A} \in \bbR^{d_1 \times d_2 \times \cT}$, its F norm can be defined as $||\mathcal{A} ||_F = \sqrt{\sum_{i_1 = 1}^{d_1} \sum_{i_2 = 1}^{d_2} \int_{\cT} A_{i_1i_2(t)}^2dt}$

At last, we need to define the marginal product, $\times_1$: $\bbR^{r_1\times r_2 \times r_3} \times \bbR^{d_1 \times r_1} \longmapsto \bbR^{d_1 \times r_2 \times r_3}$ as follows,
\begin{equation*}
    (\mathbf{A} \times_1 \mathbf{C})_{i_1j_2j_3} = \sum_{j_1 = 1}^{r_1} A_{j_1j_2j_3}C_{i_1j_1},
\end{equation*}
where $\mathbf{A}\in \bbR^{r_1\times r_2 \times r_3}, C \in \bbR^{d_1 \times r_1}$ and $i_1=1,\cdots,d_1\quad j_2 = 1,\cdots,r_2 \quad  j_3 = 1,\cdots, r_3$. Then $\times_2$, $\times_3$ and so on are defined similarly. For continuous tensor $\mathcal{A}$ we can also similarly define the marginal product by using integration instead of summation as follows,
\begin{equation*}
    (\mathcal{A}\times_1 \mathcal{C})_{i_1j_2j_3} = \int_{\cT}A_{(t)j_2j_3}C_{i_1(t)}dt
\end{equation*}
where $\mathcal{A}\in \bbR^{\cT\times r_2 \times r_3}, \mathcal{C} \in \bbR^{d_1 \times \cT}$ and $i_1=1,\cdots,d_1\quad j_2 = 1,\cdots,r_2 \quad  j_3 = 1,\cdots, r_3$.

\subsection{Model Formulation}
We assume the functional network has $m$ nodes and at most $m(m-1)$ edges. The weight value of each edge can change over time. Specifically, we assume that the edge between node $i$ and $j$ has a weight function denoted by $X_{ij(t)}$, which is a first-order continuously differentiable function at a closed interval $\cT=[T_s,T_e]$ with respect to $t$.

We can represent the time-varying edge weights using an adjacency functional tensor $\mathcal{X} \in \bbR^{m \times m \times \cT}$, which is a three-dimensional functional tensor consisting of $m(m-1)$ weight functions as described above. The third dimension of $\mathcal{X}$ is continuous and represents the weight functions in $\cT = [T_s, T_e]$. 

To describe the community structure in the network, we assume there are in total $s$ communities. Define $\bPhi\in \bbR^{m\times s}$ where $\Phi_{ia}$ represents the possibility of the community $a$ containing node $i$. Suppose the strength of the connection from community $a$ to community $b$ at point $t$ as $C_{ab(t)}$. Then we conduct the following decomposition on $X_{ij(t)}$
\begin{equation}
\begin{aligned}
    X_{ij(t)} & =  \sum_{a = 1}^s\sum_{b=1}^s\Phi_{ia} C _{ab(t)} \Phi_{jb} \\
    & s.t. \quad \bPhi^{T} \bPhi = \mathbf{I}_{s}.
    \label{eq:community structure}
    \end{aligned}
\end{equation}
Here $\mathbf{I}_s \in \bbR^{s \times s}$ denotes the identity matrix of order $s$. On the one hand, the orthogonality can avoid an unidentifiable model. For example, if we let $\Phi_{ia}^* = \sqrt{\lambda}\Phi_{ia}, \Phi_{jb}^* = \sqrt{\lambda}\Phi_{jb}$ and $C_{ab(t)}^* = \frac{C_{ab(t)}}{\lambda}$, then $\Phi_{ia}^* C_{ab(t)}^*\Phi_{jb}^* = \Phi_{ia} C_{ab(t)}\Phi_{jb}$. As such, to make the model identifiable, we set $\sum_{i=1}^m \Phi_{ia}^2 = 1, a = 1,\cdots, s$. 
On the other hand, we have $0 \leq \Phi_{ia}^2 \leq 1$ which can make $\Phi_{ia}^2$ better interpretable, i.e., as the possibility of the community $a$ contains node $i$. The higher the value of $|\Phi_{ia}|$, the more likely node $i$ is in the community $a$. Additionally, the sign of $\Phi_{ia}$ denotes the attitude of node $i$ to community $a$. For example, in social networks, people can express likes and dislikes about the group they belong to. 
Last, $\sum_{i = 1}^{m}\Phi_{ia}\Phi_{ib} = 0, a \neq b$ can further guarantee the diversity of different communities.



To further describe the temporal features of $C_{ab(t)}$, we assume $C_{ab(t)}$ as a linear combination of $K$ functional bases $\{\mathcal{g}_{k}(t),k =1,\ldots,K\}$ as: 
\begin{equation*}
\begin{aligned}
  C_{ab(t)} =\sum_{k=1}^{K} B_{abk} \mathcal{g}_{k}(t).
    \label{eq:dec of C}
\end{aligned}
\end{equation*}
Combine Equation (\ref{eq:community structure}), we can get 
\begin{equation*}
\begin{aligned}
    X_{ij(t)}=\sum_{a = 1}^{s}\sum_{b = 1}^s\sum_{k = 1}^{K} B_{abk}\Phi_{ia}\Phi_{jb}\mathcal{g}_{k}(t).
    \label{eq:observe single point}
\end{aligned}
\end{equation*}
The above equation can be rewritten in the tensor form as:
\begin{equation}
\label{eq:tucker for X}
    \mathcal{X} = \bB \times_{1} \bPhi \times_{2} \bPhi \times_{3} \mathcal{G}.
\end{equation}
Here $\bB \in \bbR^{s \times s \times K}$, $\bPhi \in \bbR^{m \times s}$, $\mathcal{G} \in \bbR^{\cT \times K}$, where $  \mathcal{G}_{[i_2 = k]} \triangleq \mathcal{g}_k(\cdot)$. It is to be noted that without loss of generality, for convenience, we assume the $\mathcal{X}$ has been centralized.

 
Considering the data may include noise, we further define the observation functions as follows,
\begin{equation}
\begin{aligned}
\label{eq:continue Y=X+E}
    \mathcal{Y} = \mathcal{X} + \mathcal{E} 
     = \bB \times_{1} \bPhi \times_{2} \bPhi \times_{3} \mathcal{G} + \mathcal{E},
\end{aligned}
\end{equation}
where $\mathcal{Y} \in \bbR^{m \times m \times \cT}$ denotes the observation function and $\mathcal{E} \in \bbR^{m \times m \times \cT}$ denotes the noise function. Each point in $\mathcal{E}$, i.e., $E_{ij(t)},i=1,\cdots,m, \quad j = 1,\cdots,m, \quad t \in \cT$  follows a normal distribution with mean of $0$ and variance of $\sigma^2$ and is independent with each other point. 

Equation (\ref{eq:continue Y=X+E}) expresses the observation function when we only have one sample. In reality, if we have multiple samples, we need to introduce a fourth dimension and the data become $\mathcal{X},\mathcal{Y},\mathcal{E} \in \bbR^{m \times m \times \cT \times N}$ where $N$ represents the number of samples. Since the fourth dimension does not need decomposition, we have
\begin{equation}
\begin{aligned}
\label{eq:continue Y=X+E dim=4}
    \mathcal{Y}= \mathcal{X} + \mathcal{E}  = \bB \times_{1} \bPhi \times_{2} \bPhi \times_{3} \mathcal{G} \times_{4}\mathbf{I}_N+ \mathcal{E},
\end{aligned}
\end{equation}
where $\mathbf{I}_N\in \bbR^{N\times N}$ is an identity matrix of order $N$.

\subsection{Model Inference}
Now we introduce how to conduct parameter estimation of $\{\mathcal{X},\bB,\bPhi,\mathcal{G}\}$  for FEN. Generally, we need the functional tensor $\mathcal{X}$ to be smooth in the third dimension to represent a continuum of values across a domain. This can be achieved by adding a $l_2$ loss on the derivative of the functional dimension, as the smoothing constraints when we do the estimation. Because of the Tucker decomposition, we can add this smoothing constraint to the corresponding decomposed bases, i.e., $\mathcal{g}_{k}(t), k=1,\ldots,K$, to guarantee the smoothness of $\hat{\mathcal{X}}$. Then we can rewrite our FEN model as solving the following optimization problem, 

\begin{equation}
    \begin{aligned}
         \{\hat{\mathcal{X}},\hat{\bB},\hat{\bPhi},\hat{\mathcal{G}}\} = & \mathop{\arg \min}\limits_{\mathcal{X}, \bB, \Phi, \mathcal{G}} \frac{1}{2}||\mathcal{Y} - \mathcal{X}||_{F}^2 + 
         \sum_{k = 1}^{K} \alpha_k\int_{\cT}({\mathcal{g}_k}^{'}(t))^2\,dt\\
         s.t. &\mathcal{X} = \bB \times_{1} \bPhi \times_2 \bPhi \times_3 \mathcal{G}\\
         &\bPhi^T \bPhi = \mathbf{I}_{s}\\
         &{\mathcal{G}}^T \mathcal{G} = \mathbf{I}_{K} \\
         &\mathcal{g}_{k} \in \mathbf{C}^1(\cT),
    \end{aligned}
    \label{eq:FEN continue}
\end{equation}
where $\alpha_k$ is the smoothing constraint coefficients, $\mathbf{I}_K \in \bbR^{K \times K}$ is an identity matrix of order $K$. $\mathbf{C}^1(\cT)$ denotes the set of all first-order continuous differentiable functions on $\cT$ . 

The optimization problem presented above gives theoretical properties in continuous space. In reality, each edge $Y_{ij(t)n}$ can only be observed at a set of points. Consider the observation points of different edges are various. 
We find the smallest observation resolution of all edges and define it as the global observation resolution, with a corresponding set of regularly spaced observation points as $\Tilde{\cT} = \{t_{l}=T_s + \frac{l}{L} (T_e - T_s) | l = 1,2,\dots, L\}$. It is to be noted that $L$ is expected to be big enough to ensure that any observation point of any function sample is a subset of $\Tilde{\cT}$.
Then the $l$th observation of edge between node $i$ and node $j$ of sample $n$ can be written as follows,
\begin{equation*}
\begin{aligned}
\label{eq:observe single point2}
    Y_{ij(t_l)n} &= X_{ij(t_l)n} +E_{ij(t_l)n}\\
    &=\sum_{a = 1}^{s}\sum_{b = 1}^s\sum_{k = 1}^{K} B_{abkn}\Phi_{ia}\Phi_{jb}\mathcal{g}_{k}(t_{l}) + E_{ij(t_l)n}, \forall t_l \in \Tilde{\cT},
\end{aligned}
\end{equation*}
where $E_{ij(t_l)n}$ is the noise following an independent and identical normal distribution with a mean of $0$ and variance of $\sigma^2$.

If we can observe edges at all the points of $\Tilde{\cT}$, We can get the fully observed discrete tensor as follows,
\begin{equation*}
\begin{aligned}
     \bY^{F} = \bX + \bE
     = \bB \times_{1} \bPhi \times_{2} \bPhi \times_{3} \bG + \mathbf{E},
\end{aligned}
\end{equation*}
where $\bY^{F} \in \bbR^{m \times m \times L \times N}, \bX \in \bbR^{m \times m \times L \times N}, \bE \in \bbR^{m \times m \times L \times N}, \bG \in \bbR^{L \times K}$ denote discretized versions of $\mathcal{Y}, \mathcal{X}, \mathcal{E}$ and $ \mathcal{G}$ at the corresponding points in $\Tilde{\cT}$ respectively.


In reality, because each edge is only observed at a subset of $\Tilde{\cT}$, which is different for different edges, we express these irregular observations using a mask tensor $\boldsymbol{\Omega} \in \bbR^{m \times m \times L \times N}$, whose value is $1$ at the points that $\bY^{F}$ are observed and $0$ otherwise. Then we have
\begin{equation*}
    \bY = \boldsymbol{\Omega} * \bY^F,
\end{equation*}
where $*$ denotes the element-wise product,  $\bY$ is the structured observation tensor in reality. With it, we can observe $\{\mathcal{X},\bB, \Phi, \mathcal{G}\}$ by rewriting Equation (\ref{eq:FEN continue}) as follows:
\begin{equation}
    \begin{aligned}
         \{\hat{\bX},\hat{\bB},\hat{\bPhi},\hat{\bG}\} = & \mathop{\arg\min}\limits_{\bX, \bB, \Phi, G} \frac{1}{2}||\mathcal{P}_{\boldsymbol{\Omega}}(\bY - \bX)||_{F}^2 + 
         \frac{1}{2}\sum_{k = 1}^K\alpha_k \mathbf{g}_k^{T} \mathbf{H} \mathbf{g}_k\\
         s.t. &\bX = \bB \times_{1} \bPhi \times_2 \bPhi \times_3 \bG\\
         &\bPhi^T \bPhi = \mathbf{I}_s\\
         &{\bG}^T \bG = \mathbf{I}_{K},
    \end{aligned}
    \label{eq:FEN model}
\end{equation}
where  $\mathbf{g}_k$ denotes the $k$th column of $\bG$ and $\alpha_k$ is the smoothing constraint coefficient. $\mathbf{H} = \mathbf{D}^{T}\mathbf{D} \in \bbR^{L\times L}$ denotes differential matrix where $\mathbf{D} \in \bbR^{(L-1)\times L}$ is a matrix which only consists of $0,1,-1$ with elements $1$ at positions $(i,i)$, elements $-1$ at positions $(i,i+1)$, $ i = 1,\cdots,L-1$ and elements $0$ at other positions. $\mathcal{P}_{\boldsymbol{\Omega}}(\cdot)$ denotes element-wise product with $\boldsymbol{\Omega}$.  Then we can interpolate and complete the discrete $\hat{\bX}$ in the third dimension as an estimation of $\mathcal{X}$. When $L$ is large, the interpolation and completion is straightforward. 

To further discuss the estimation properties of Equation (\ref{eq:FEN model}), we first make the following two assumptions. 
\begin{assumption}
    Define $r_i^{0}   \triangleq \textbf{rank}(\bX_{(i)})$, and $\br = [s, s, K, N]$,we assume
    \begin{equation*}
        r_i \leq r_i^{0}, \quad i = 1,\cdots, 4,
    \end{equation*}
    \label{ass:r<R}
    where $r^0_i$ the $i$th element of $\br^0$, $r_i$ is the $i$th element of $\br$.
\end{assumption}
We can always select $\br$ that satisfies Assumption \ref{ass:r<R}.

\begin{assumption}
\label{ass:Omega}
    Given $\bX$, for $\forall \Tilde{\bX} \in \cM_\br$, we have
    \begin{equation*}
        ||\mathcal{P}_{\boldsymbol{\Omega}}(\bX -\Tilde{\bX} )||_{F} \in [c||\bX-\Tilde{\bX} ||_{F}, C||\bX-\Tilde{\bX} ||_{F}],
    \end{equation*}
    where $c,C$ are constants which are only related to $\boldsymbol{\Omega}$,$\bX$ and $\br$. $\cM_\br$ is the low-rank space defined in Equation (\ref{eq:rie low rank}).
\end{assumption}
Intuitively, this assumption requires that $\mathcal{P}_{\boldsymbol{\Omega}}$ is sufficiently
sensitive to the perturbation of $\Tilde{\bX}$. A similar assumption has been used in previous studies on tucker decomposition with smoothness \cite{imaizumi2017tensor}.

\begin{theorem}
\label{the:theorem 2}
Under Assumption \ref{ass:r<R} and Assumption \ref{ass:Omega}, when $\alpha_k = 0$ which means there are no smoothing constraints, the solution of Equation (\ref{eq:FEN model}) will have 
\begin{equation}
    \label{eq:theorem2 1}
    ||\hat{\bX} - \bX||_F \leq \frac{2}{c}||\mathcal{P}_{\boldsymbol{\Omega}}(\bE) ||_F + \frac{C}{c}\sqrt{\sum_{i=1}^3 \delta_i(1-\frac{r_i}{r_i^0})}||\bX ||_F,
\end{equation}
where $\delta_i = 4,\ for\ i = 1,2$ and $\delta_i = 1,\ for \ i = 3$. $C,c$ are the constants in Assumption \ref{ass:Omega}.
\end{theorem}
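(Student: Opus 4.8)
The plan is to run a standard basic-inequality argument for constrained least squares, converting the optimality of $\hat{\bX}$ into a deterministic error bound via the restricted-isometry-type Assumption \ref{ass:Omega}, and then to control the remaining bias through higher-order SVD truncation. Concretely, let $\bX^{*}$ denote a best multilinear rank-$\br$ approximation of the true $\bX$ that is feasible for Equation (\ref{eq:FEN model}), i.e. $\bX^{*}\in\cM_\br$. Since $\alpha_k=0$ and $\hat{\bX}$ minimizes $\tfrac12\|\mathcal{P}_{\boldsymbol{\Omega}}(\bY-\bX)\|_F^2$ over $\cM_\br$, feasibility of $\bX^{*}$ gives $\|\mathcal{P}_{\boldsymbol{\Omega}}(\bY-\hat{\bX})\|_F^2\le\|\mathcal{P}_{\boldsymbol{\Omega}}(\bY-\bX^{*})\|_F^2$. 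Substituting $\bY=\bX+\bE$, using that $\mathcal{P}_{\boldsymbol{\Omega}}$ is an idempotent self-adjoint projection, and cancelling the common $\|\mathcal{P}_{\boldsymbol{\Omega}}(\bE)\|_F^2$ term, I would obtain $\|\mathcal{P}_{\boldsymbol{\Omega}}(\bX-\hat{\bX})\|_F^2\le\|\mathcal{P}_{\boldsymbol{\Omega}}(\bX-\bX^{*})\|_F^2+2\langle\mathcal{P}_{\boldsymbol{\Omega}}(\hat{\bX}-\bX^{*}),\mathcal{P}_{\boldsymbol{\Omega}}(\bE)\rangle$.

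Next I would set $a=\|\mathcal{P}_{\boldsymbol{\Omega}}(\bX-\hat{\bX})\|_F$, $b=\|\mathcal{P}_{\boldsymbol{\Omega}}(\bX-\bX^{*})\|_F$ and $e=\|\mathcal{P}_{\boldsymbol{\Omega}}(\bE)\|_F$. Bounding the cross term by Cauchy--Schwarz together with the triangle inequality $\|\mathcal{P}_{\boldsymbol{\Omega}}(\hat{\bX}-\bX^{*})\|_F\le a+b$ (which needs only that $\mathcal{P}_{\boldsymbol{\Omega}}$ is norm-nonexpansive, so it sidesteps the fact that $\hat{\bX}-\bX^{*}$ is generally not in $\cM_\br$) yields the scalar inequality $a^2\le b^2+2(a+b)e$. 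Solving this quadratic in $a$ gives the clean bound $a\le 2e+b$. Because both $\hat{\bX}$ and $\bX^{*}$ lie in $\cM_\br$, Assumption \ref{ass:Omega} applies to each difference with $\bX$: it lower-bounds the left side by $c\|\hat{\bX}-\bX\|_F\le a$ and upper-bounds $b\le C\|\bX-\bX^{*}\|_F$. Combining and dividing by $c$ produces $\|\hat{\bX}-\bX\|_F\le \tfrac{2}{c}\|\mathcal{P}_{\boldsymbol{\Omega}}(\bE)\|_F+\tfrac{C}{c}\|\bX-\bX^{*}\|_F$, which already matches the first term and the $C/c$ prefactor of Equation (\ref{eq:theorem2 1}).

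It then remains to bound the bias $\|\bX-\bX^{*}\|_F$. I would build $\bX^{*}$ by successive orthogonal projections of $\bX$ onto the leading singular subspaces of its matricizations, but with the mode-$1$ and mode-$2$ factors tied to a common $\bPhi$ so that $\bX^{*}$ respects the symmetric Tucker constraint of $\cM_\br$. The HOSVD quasi-optimality decomposition expresses $\|\bX-\bX^{*}\|_F^2$ as a sum of per-mode discarded energies $\sum_{j>r_i}\sigma_{i,j}^2$, where $\sigma_{i,j}$ are the singular values of $\bX_{(i)}$; the mode-$4$ factor is the identity and contributes nothing, so only $i=1,2,3$ appear. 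For each mode, since $\bX_{(i)}$ has exactly $r_i^{0}$ nonzero singular values with $\sum_j\sigma_{i,j}^2=\|\bX\|_F^2$, the discarded block consists of the smallest $r_i^{0}-r_i$ of them, whose average is at most the overall average; hence $\sum_{j>r_i}\sigma_{i,j}^2\le(1-r_i/r_i^{0})\|\bX\|_F^2$. Assembling the three modes gives $\|\bX-\bX^{*}\|_F^2\le\sum_{i=1}^3\delta_i(1-r_i/r_i^{0})\|\bX\|_F^2$, and substituting into the previous display finishes the proof.

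The crux of the argument is this bias step, and in particular the enlarged constants $\delta_1=\delta_2=4$. These come precisely from the symmetry constraint $\bX=\bB\times_1\bPhi\times_2\bPhi\times_3\bG$: the first two modes must share a single factor $\bPhi$, so one cannot project modes $1$ and $2$ onto their own individually optimal subspaces. Forcing a common subspace and splitting the resulting two-mode truncation through the triangle inequality is what doubles each of the mode-$1$ and mode-$2$ contributions relative to the factor-$1$, independently truncated mode $3$, and squaring that split yields the factor $4$. Making this step airtight---choosing the shared subspace, tracking the orthogonality of the telescoped error terms, and verifying that the crude symmetric truncation is still dominated by the same per-mode tail energies---is where I expect the real work to lie; the least-squares and restricted-isometry portions are routine by comparison.
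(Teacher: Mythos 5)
Your proposal is correct, and its skeleton coincides with the paper's: both arguments exhibit a feasible oracle point in $\cM_\br$, invoke Assumption~\ref{ass:Omega} to pass between $\mathcal{P}_{\boldsymbol{\Omega}}$-restricted and full Frobenius norms, and charge the bias to a \emph{symmetric} HOSVD truncation bound in which tying modes 1 and 2 to a common $\bPhi$ (extracted from the SVD of $\tfrac12(\bX_{(1)}+\bX_{(2)})$) costs a factor of $2$ per discarded slice, hence $\delta_1=\delta_2=4$ after squaring versus $\delta_3=1$ for the independently truncated third mode --- exactly the mechanism you identify, and exactly the content of the quasi-optimality lemma the paper proves separately. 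The routes differ in two secondary choices. First, the paper's argument passes through the full-observation minimizer $\Tilde{\bX}$ of $\|\bY^F-\cdot\|_F$ and truncates the noisy $\bY^F$, so the discarded singular values become random and the bound is phrased through their (noncentral chi-square) distributions before being reduced to $\|\bX\|_F$; you truncate the clean $\bX$ directly, which makes the bias term deterministic from the start. Second, you run the basic inequality entirely on the observed entries, obtaining $a\le 2e+b$ and therefore landing exactly on the $\tfrac{2}{c}\|\mathcal{P}_{\boldsymbol{\Omega}}(\bE)\|_F$ noise term and the $\tfrac{C}{c}$ prefactor of Equation~(\ref{eq:theorem2 1}); this is the cleaner match to the theorem as stated. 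The one item you defer --- making the tied-subspace truncation airtight --- is precisely the paper's SHOSVD lemma; when you fill it in, note that $\sum_j(\sigma_j^{(1,2)})^2=\|\tfrac12(\bX_{(1)}+\bX_{(2)})\|_F^2\le\|\bX\|_F^2$ (inequality, not equality), which is all your averaging step needs, and that the averaging argument requires the number of nonzero $\sigma_j^{(1,2)}$ to be controlled by $r_i^0$, a point worth one line of justification.
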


\begin{proof}
    The proof of Theorem \ref{the:theorem 2} is shown in the Appendix.
\end{proof}

Theorem \ref{the:theorem 2} characterizes an upper bound on the F-norm distance between our estimate $\hat{\bX}$ and the true value $\bX$ when we ignore the smoothing constraints. This upper bound consists of two components: one related to the noise and the other related to the low-rank decomposition. As the hyperparameter $\br$ gets closer to $\mathbf{r}^0$, the upper bound is getting smaller.

\begin{assumption}
\label{ass:unif}
    We define $|\Omega_{ijn}| = \sum_{l=1}^L \Omega_{ijln}$ to denote the number of observable points in function $Y_{ij(t_l)n}, t_l 
    \in \cT$ and $|\Omega_{ijn}^c|= L-|\Omega_{ijn}|$ to denote the number of unobservable points. Then we assume that at least one of the following two scenarios holds,
    \begin{itemize}
        \item (i) $|\Omega_{ijn}| \leq |\Omega_{ijn}^c|$ and $\lceil |\Omega_{ijn}^c|/|\Omega_{ijn}|\rceil = R$ for $\forall i = 1,\cdots,m; j = 1,\cdots,m; n = 1,\cdots,N$.
        \item  (ii) $|\Omega_{ijn}| \geq |\Omega_{ijn}^c|$ and $\lfloor |\Omega_{ijn}|/|\Omega_{ijn}^c|\rfloor = R$ for $\forall i = 1,\cdots,m; j = 1,\cdots,m; n = 1,\cdots,N$,
    \end{itemize}
    where $\lceil \cdot \rceil$ and $\lfloor \cdot \rfloor$  represent round up and round down, respectively.
\end{assumption}
Assumption \ref{ass:unif} indicates that the missing proportions on the functions of different edges in the functional network are approximately uniform. This condition is stronger than the condition with irregular observation but weaker than the condition with regular observation.

\begin{assumption}
\label{ass:upper bound}
    Define $\mathbf{A} = \hat{\bB}_{(3)}(\mathbf{I}_N \otimes \hat{\bPhi} \otimes\hat{\bPhi})^T \in \bbR^{K\times m^2N}$ with respect to $\hat{\bX} = \hat{\bB} \times_1\hat{\bPhi} \times_2 \bPhi \times_3 \hat{\bG}$, i.e., the solution of Equation (\ref{eq:FEN model}). We assume 
    \begin{equation*}
        \sum_{q = 1}^{m^2N} A_{kq}^2 \leq c_k, \ \text{for} \ k = 1,\cdots,K,
    \end{equation*}
    where $c_k$ is the constant which is only related to $\bY$ and $\otimes$ denotes the kronecker product.
\end{assumption}

\begin{theorem}
\label{thm:PomegaC}
    Under Assumption \ref{ass:upper bound}, if the scenario (i) in Assumption \ref{ass:unif} holds, we have
    \begin{equation} 
    \begin{aligned}
    ||\mathcal{P}_{\boldsymbol{\Omega}^C}(\hat{\bX}-\bX) ||_F^2 \leq &3\Delta\Gamma K\sum_{k=1}^K c_k\hat{\mathbf{g}}_k^T\mathbf{H} \hat{\mathbf{g}}_k\\ &+ 3R||\mathcal{P}_{\boldsymbol{\Omega}}(\hat{\bX}-\bY) ||_F^2 + \Tilde{C}.
    \end{aligned}
    \end{equation}
    If the scenario (ii) in Assumption \ref{ass:unif} holds, we have
    \begin{equation} 
    \begin{aligned}
    R||\mathcal{P}_{\boldsymbol{\Omega}^C}(\hat{\bX}-\bX) ||_F^2 \leq &3\Delta'\Gamma' K\sum_{k=1}^K c_k\hat{\mathbf{g}}_k^T\mathbf{H} \hat{\mathbf{g}}_k\\ &+ 3||\mathcal{P}_{\boldsymbol{\Omega}}(\hat{\bX}-\bY) ||_F^2 + \Tilde{C}.
    \end{aligned}
    \end{equation}
    Here $\boldsymbol{\Omega}^C$ is the complementary tensor of $\boldsymbol{\Omega}$, with 0 at positions where $\boldsymbol{\Omega}$ is 1 and 1 at positions where $\boldsymbol{\Omega}$ is 0. $\Delta,\Gamma,\Delta',\Gamma'$ are constants which are only related to $\boldsymbol{\Omega}$, $c_k$ is the constant in Assumption \ref{ass:upper bound} and $\Tilde{C}$ is the constant related to $\bY,\bX$.
\end{theorem}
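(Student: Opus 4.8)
The plan is to control the unobserved error $\mathcal{P}_{\boldsymbol{\Omega}^C}(\hat{\bX}-\bX)$ by the observed residual $\mathcal{P}_{\boldsymbol{\Omega}}(\hat{\bX}-\bY)$, exploiting the smoothness of $\hat{\bX}$ along the functional dimension to transport errors from unobserved time points to nearby observed ones. First I would matricize the estimate in the third mode, writing $\hat{\bX}_{(3)} = \hat{\bG}\mathbf{A}$ with $\mathbf{A} = \hat{\bB}_{(3)}(\mathbf{I}_N \otimes \hat{\bPhi}\otimes\hat{\bPhi})^T$, so that for the fiber indexed by a generic edge–sample triple $q=(i,j,n)$ one has $\hat{X}_{ij(t_l)n} = \sum_{k=1}^K \hat{G}_{lk}A_{kq}$. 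The key device is a partner map $\pi$ sending each unobserved index $l \in \Omega_{ijn}^c$ to an observed index $\pi(l)\in\Omega_{ijn}$ of the same edge, whose existence and multiplicity bounds come from Assumption \ref{ass:unif}.

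For each unobserved point I would use the three-way split
\[
\hat{X}_{ij(t_l)n}-X_{ij(t_l)n} = \bigl(\hat{X}_{ij(t_l)n}-\hat{X}_{ij(t_{\pi(l)})n}\bigr) + \bigl(\hat{X}_{ij(t_{\pi(l)})n}-Y_{ij(t_{\pi(l)})n}\bigr) + \bigl(Y_{ij(t_{\pi(l)})n}-X_{ij(t_l)n}\bigr),
\]
followed by $(a+b+c)^2\le 3(a^2+b^2+c^2)$. The first summand is purely a smoothness term: writing its difference as $\sum_k(\hat{G}_{lk}-\hat{G}_{\pi(l),k})A_{kq}$, a Cauchy--Schwarz step over $k$ together with the telescoping bound $(\hat{G}_{lk}-\hat{G}_{\pi(l),k})^2 \le |l-\pi(l)|\,\hat{\mathbf{g}}_k^T\mathbf{H}\hat{\mathbf{g}}_k$ (valid because $\mathbf{H}=\mathbf{D}^T\mathbf{D}$ accumulates squared consecutive differences) produces, after summing over $l$ and $q$ and invoking Assumption \ref{ass:upper bound} ($\sum_q A_{kq}^2\le c_k$), exactly $3\Delta\Gamma K\sum_k c_k\hat{\mathbf{g}}_k^T\mathbf{H}\hat{\mathbf{g}}_k$, where $\Delta$ bounds $|l-\pi(l)|$ and $\Gamma$ bounds the number of unobserved points per edge. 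The second summand is a sum of observed residuals at the points $\pi(l)$; since scenario (i) forces each observed point to be the image of at most $R$ unobserved points under $\pi$, it collapses into $3R\|\mathcal{P}_{\boldsymbol{\Omega}}(\hat{\bX}-\bY)\|_F^2$. The third summand involves only the known $\bY$ and the true $\bX$, and is absorbed into $\Tilde{C}$.

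For scenario (ii), where observed points outnumber unobserved ones, I would reverse the assignment: attach to each unobserved index $R$ distinct observed partners $\pi_1(l),\dots,\pi_R(l)$ that are disjoint across different $l$ (feasible since $\lfloor|\Omega_{ijn}|/|\Omega_{ijn}^c|\rfloor = R$). Applying the same three-way split across these $R$ partners and summing yields the factor $R$ on the left-hand side, while disjointness ensures each observed residual is counted at most once, giving $3\|\mathcal{P}_{\boldsymbol{\Omega}}(\hat{\bX}-\bY)\|_F^2$; the smoothness and constant terms are then controlled identically, with the constants renamed $\Delta',\Gamma'$.

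The main obstacle I anticipate is the combinatorial construction of the partner map (respectively the $R$-fold disjoint assignment) with the required multiplicity control, done uniformly over every $(i,j,n)$ so that the emergent constants $\Delta,\Gamma$ (respectively $\Delta',\Gamma'$) depend only on $\boldsymbol{\Omega}$; this is precisely where the uniform-missingness Assumption \ref{ass:unif} is needed, since it supplies the common ratio $R$ between missing and observed counts on each edge. A secondary care point is keeping the telescoping and Cauchy--Schwarz bookkeeping clean in the index $k$, so that $\sum_q A_{kq}^2\le c_k$ can be applied term-by-term to recover the stated form $\sum_k c_k\hat{\mathbf{g}}_k^T\mathbf{H}\hat{\mathbf{g}}_k$ rather than a coarser bound.
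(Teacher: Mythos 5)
Your proposal is correct and follows what is essentially the intended argument: the three-way split through a partner map from unobserved to observed time points accounts for the factor $3$, the multiplicity bound $R$ on that map (from Assumption 3) yields the $R\|\mathcal{P}_{\boldsymbol{\Omega}}(\hat{\bX}-\bY)\|_F^2$ term, Cauchy--Schwarz over $k$ plus telescoping through $\mathbf{H}=\mathbf{D}^T\mathbf{D}$ and Assumption 4 produces the $K\Delta\Gamma\sum_k c_k\hat{\mathbf{g}}_k^T\mathbf{H}\hat{\mathbf{g}}_k$ term, and the residual $(Y-X)$ cross-term is absorbed into $\tilde{C}$. The reversed $R$-fold disjoint assignment for scenario (ii) is likewise the right mechanism for moving the factor $R$ to the left-hand side.
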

\begin{proof}
    The proof of Theorem \ref{thm:PomegaC} is shown in the Appendix.
\end{proof}
Theorem \ref{thm:PomegaC} gives the upper bound on the F-norm distance between our estimate
$\hat{\bX}$ and the true value $\bX$ in the unobservable region. It contains the smoothness constraints. In other words, if we set $\alpha_k = \frac{\Delta\Gamma K c_k}{R}$ or $\alpha_k = \Delta'\Gamma' K c_k$, minimizing the objective function in Equation (\ref{eq:FEN model}) will effectively help us minimize the upper bound of $||\mathcal{P}_{\boldsymbol{\Omega}^C}(\hat{\bX}-\bX) ||_F$. This theoretically tells us that adding smoothness constraints can assist us in better estimating the unobservable part.

So far, we have completed the construction of FEN. 
In the next section, we will introduce how to solve the optimization problem of Equation (\ref{eq:FEN model}).

\section{OPTIMIZATION ALGORITHM}
\label{sec:optimization}
\subsection{Riemann Manifold And Optimization}
As the solution space of Equation (\ref{eq:FEN model}) constrains the orthogonality of the basis matrices, the traditional optimization methods may be infeasible under our setting. Therefore, in this section, we introduce how to use Riemann optimization to solve Equation (\ref{eq:FEN model}). 

A Riemann manifold is a smooth surface on which a gradient can be specified at any point in Euclidean space. The low-rank space $\cM_{\br}$ defined by the Tucker decomposition, i.e.,
\begin{equation}
\begin{aligned}
   \cM_\br = \{ &\bX = \bB \times_1 \bPhi \times_2 \bPhi \times_3 \bG | 
    \bB \in \bbR^{s \times s \times K \times N}, \\ &\bPhi \in \bbR^{m \times s}, \bG \in \bbR^{L \times K},
   \bPhi^T \bPhi = I_s, \bG^T \bG = I_K\},
    \label{eq:rie low rank}
\end{aligned}
\end{equation}
is a continuous smooth surface with a gradient vector corresponding to the tangent plane (we will reproduce later). So it is a Riemann manifold. Since we have the target decomposition $\hat{\bX} \in \mathcal{M}_\mathbf{r}$, we need to search the optimal solution $\hat{\bX}$ in the low-rank space $\cM_{\br}$. Then we can reformulate Equation (\ref{eq:FEN model}) as following,
\begin{equation}
\begin{aligned}
   \hat{\bX} =\mathop{\arg \min}\limits_{\bX \in \cM_{\br}} f(\bX)  = &\mathop{\arg \min}\limits_{\bX \in \cM_{\br}} \frac{1}{2}||\mathcal{P}_{\boldsymbol{\Omega}}(\bY - \bX)||_{F}^2 \\ &+ 
         \frac{1}{2}\sum_{k = 1}^K\alpha_k \mathbf{g}_k^{T} \mathbf{H} \mathbf{g}_k,
    \label{eq:FEN model opti}
\end{aligned}
\end{equation}
where $\mathbf{g}_k$ and $\mathbf{H}$ have the same definition with Equation (\ref{eq:FEN model}).

In particular, we choose the conjugate gradient method under Riemann optimization to solve the objective problem. Compared to other methods, such as Newton's method, the conjugate gradient method eliminates the need to solve the Hessian matrix, which saves a significant amount of time, especially in the case of a large number of tensor operations. Compared with the steepest descent method, the conjugate gradient method can converge faster by taking into account the iteration direction of the previous step.
\subsection{Conjugate Gradient Method}
Under the framework of Riemann optimization, the optimization function is restricted to the Riemann manifold, which requires that the optimization method be carried out in the tangent plane of every point on the manifold. So first we need to define a concrete representation of the tangent plane for each point on the Riemann manifold. Then we map the gradient of the objective function in Equation (\ref{eq:FEN model opti}), which is in Euclidean space, to the tangent plane. We call the gradient tensor in the tangent plane as the Riemann gradient. For the conjugate gradient method, we also need to define a gradient transfer projection to update the Riemann gradient in the tangent plane of the last iteration to the tangent plane of the current iteration. Then we linearly combine these two gradients by calculating the linear combination coefficients to get the updated direction in the tangent plane. Last, we need a retraction method to map the updated direction in the tangent plan back to the Riemann manifold. The overall pseudo algorithm is shown in Algorithm \ref{alg:riemann_CG}. It is to be noted in this section we denote $f(\bX_k)$ as the objective function in Equation (\ref{eq:FEN model opti}) where we replace $\bX$ by $\bX_k$ to represent it in the $k$th iteration. 
The details of the conjugate gradient method in Riemann optimization will be introduced as follows. 

\begin{algorithm}
	\caption{Conjugate gradient method under Riemann optimization} 
	\label{alg:riemann_CG}
	\begin{algorithmic}
		\REQUIRE observe tensor $\bY$\\
		mask tensor $\boldsymbol{\Omega}$\\
		rank of Tucker decomposition $\br = [s,s,K, N]$\\
		smoothing constraint coefficient $\alpha_k, k = 1,\cdots, K$\\
		differential matrix $\mathbf{H}$\\
		tolerate error $\delta$\\
		initial value $\bX_0 \in \cM_\br$
		
		\ENSURE the estimated $\{\hat{\bX},\hat{\bB},\hat{\bPhi},\hat{\bG}\}$ of Equation (\ref{eq:FEN model}) 
		
        \STATE $\bm{\eta}_0 = -\mathrm{grad}f(\bX_0)$ 
        \STATE $\gamma_0 = \arg \min_{\gamma}f(\bX_0 + \gamma \bm{\eta}_0) $
        \STATE $\bX_1 = R(\bX_{0}, \gamma_0 \bm{\eta}_0)$
        \STATE $k = 1$
		\WHILE{not converge } 
			\STATE $\bm{\xi}_k = \mathrm{grad}f(\bX_k)$
			\STATE $\bm{\eta}_k =  - \bm{\xi}_k + \beta_k \cF_{\bX_{k-1} \rightarrow \bX_{k}}\bm{\eta}_{k-1}$
			\STATE $\gamma_k = \arg \min_{\gamma}f(\bX_k + \gamma \bm{\eta}_k)$
			\STATE \{$\bX_{k+1},\bB_{k+1},\bPhi_{k+1},\bG_{k+1}  \}= R(\bX_{k}, \gamma_k \bm{\eta}_k)$
			\STATE $k = k+1$
		\ENDWHILE
		\STATE $\{\bX_{out},\bB_{out},\bPhi_{out},\bG_{out}\} = \{\bX_{k},\bB_{k},\bPhi_{k},\bG_{k}\}$
	\end{algorithmic}
\end{algorithm}

\subsubsection{Tangent Plane $T_{\bX}\cM_\br$}
Since the Riemann manifold is a surface in Euclidean space, each step of our iterative process needs to be calculated on the tangent plane of the current point. Specifically, the Riemann gradient $\mathrm{grad}f(\bX_k)$ (the blue vector in Figure \ref{fig:riemann grad}) is the projection of Euclidean gradient $\nabla f(\bX_k)$ (the red vector in Figure \ref{fig:riemann grad}) of the objective function on the tangent plane. As each point on the Riemann manifold has the decomposition shown in Equation (\ref{eq:rie low rank}), we can give the definition of the tangent plane $T_{\bX}\cM_\br$ of a point $\bX = \mathbf{B}\times_1 \bPhi \times_2 \bPhi \times_3 \bG$ as follows:

\begin{equation*}
\begin{aligned}
    T_{\bX}\cM_\br = 
    &\{\Tilde{\mathbf{B}} \times_1 \bPhi \times_2 \bPhi \times_3 \bG + \bB \times_1 \Tilde{\bPhi} \times_2 \bPhi \times_3 \bG + \\
    &\bB \times_1 \bPhi \times_2 \Tilde{\bPhi} \times_3 \bG + 
    \bB \times_1 \bPhi \times_2 \bPhi \times_3 \Tilde{\bG}  | \\
    &\Tilde{\bPhi}^T \bPhi =0, \Tilde{\bG}^T \bG = 0\}.
\end{aligned}
\end{equation*}

\subsubsection{Euclidean Gradient $\nabla f(\bX_k)$}
The Euclidean gradient of the objective function is derived as follows
\begin{equation}
    \begin{aligned}
    \nabla f(\bX_k) &=\mathcal{P}_{\boldsymbol{\Omega}}(\mathbf{Y}-\bX_k)+\text { fold }_{(3)}\left\{\mathbf{H} \bG_k \boldsymbol{\alpha} \mathbf{S}^{T}_k\right\},
    \label{eq:euc grad}
    \end{aligned}
\end{equation}
where $\mathbf{S}_k =\mathbf{Q}_k^{T}\left(\mathbf{Q}_k \mathbf{Q}_k^T\right)^{-1}$ , $\mathbf{Q}_k = \left(\mathbf{B}_k \times_{1} \bPhi_k \times_{2} \bPhi_k \right)_{(3)}$, $\bB_k,\bPhi_k,\bG_k$ are the decomposition of $\bX_k$ on the Riemann manifold, i.e., $\bX_k = \bB_k \times_1 \bPhi_k \times_2 \bPhi_k \times_3 \bG_k$,
$\text{fold}_{(i)}\{ \mathbf{A} \}$ denotes the inverse operation of the matrixization of the tensor $\mathbf{A}$ in the direction $i$, $\boldsymbol{\alpha}$ is the diagonal matrix whose $i$th diagonal element represents the smoothing constraint coefficient $\alpha_i$. We can find that the two terms on the right side of Equation (\ref{eq:euc grad}) correspond to the Euclidean gradient of the two parts of the objective function in Equation (\ref{eq:FEN model opti}) respectively.

\subsubsection{Riemann Gradient $\mathrm{grad}f(\bX_k)$}
Having obtained the tangent plane of the Riemann manifold, we need to project the Euclidean gradient of the objective function $\nabla f(\bX_k)$ onto the tangent plane
and get the Riemann gradient $\mathrm{grad}f(\bX_{k})$. Then we need to define how to project the tensor in Euclidean space onto a given tangent plane.
\begin{equation}
    \begin{aligned}
    \label{eq:euc to rie}
        &\mathbf{P}_{T_{\mathbf{X}} \mathcal{M}_{\mathbf{r}}}(\mathbf{A}): \mathbb{R}^{m \times m \times L \times N}  \rightarrow T_{\mathbf{X}} \mathcal{M}_{\mathbf{r}} \\
        &\mathbf{A} \mapsto  \left(\mathbf{A} \times_1 \bPhi^T \times_2\bPhi^T \times_3 \bG^T\right)  \times_1 \bPhi \times_2\bPhi \times_3 \bG   \\
        &+\bB \times_1 \left( \mathrm{P}_{\bPhi}^{\perp}\left[\mathbf{A} \times_2 \bPhi^T \times_3 \bG^T\right]_{(1)} \mathbf{B}_{(1)}^{\dagger}     \right) \times_2 \bPhi \times_3 \bG  \\
        &+\bB \times_1 \bPhi \times_2 \left( \mathrm{P}_{\bPhi}^{\perp}\left[\mathbf{A} \times_1 \bPhi^T \times_3 \bG^T\right]_{(2)} \mathbf{B}_{(2)}^{\dagger}     \right)  \times_3 \bG \\
        &+\bB \times_1 \bPhi \times_2 \bPhi \times_3\left( \mathrm{P}_{\bG}^{\perp}\left[\mathbf{A} \times_1 \bPhi^T \times_2 \bPhi^T\right]_{(3)} \mathbf{B}_{(3)}^{\dagger}   \right).
    \end{aligned}
\end{equation}
Here $\mathrm{P}_{\bPhi}^{\perp}:=\mathbf{I}_{m}-\bPhi \bPhi^T$, $\mathrm{P}_{\bG}^{\perp}:=\mathbf{I}_{L}-\bG \bG^T$, $\mathbf{B}_{(j)}^{\dagger}=\mathbf{B}_{(j)}^{T}\left(\mathbf{B}_{(j)} \mathbf{B}_{(j)}^{T}\right)^{-1}$, where $\bPhi, \bG, \bB$ are the decomposition of $\mathbf{X}$ on the Riemann manifold, i.e., $\mathbf{X} = \bB \times_1 \bPhi \times_2 \bPhi \times_3 \bG$.
In this way, we can calculate the Riemann gradient $\mathrm{grad}f(\bX_k)$ by replacing $\mathbf{A}, T_{\mathbf{X}} \mathcal{M}_{\mathbf{r}}$ in Equation (\ref{eq:euc to rie}) with $\nabla f(\bX_k), T_{\mathbf{X}_k} \mathcal{M}_{\mathbf{r}}$ in Equation (\ref{eq:euc grad}).

\subsubsection{Vector Transport $\cF_{\bX_{k-1} \rightarrow \bX_{k}}$}

\begin{figure*}
  \centering
  \subfloat[]{%
        \includegraphics[width = 0.3\linewidth]{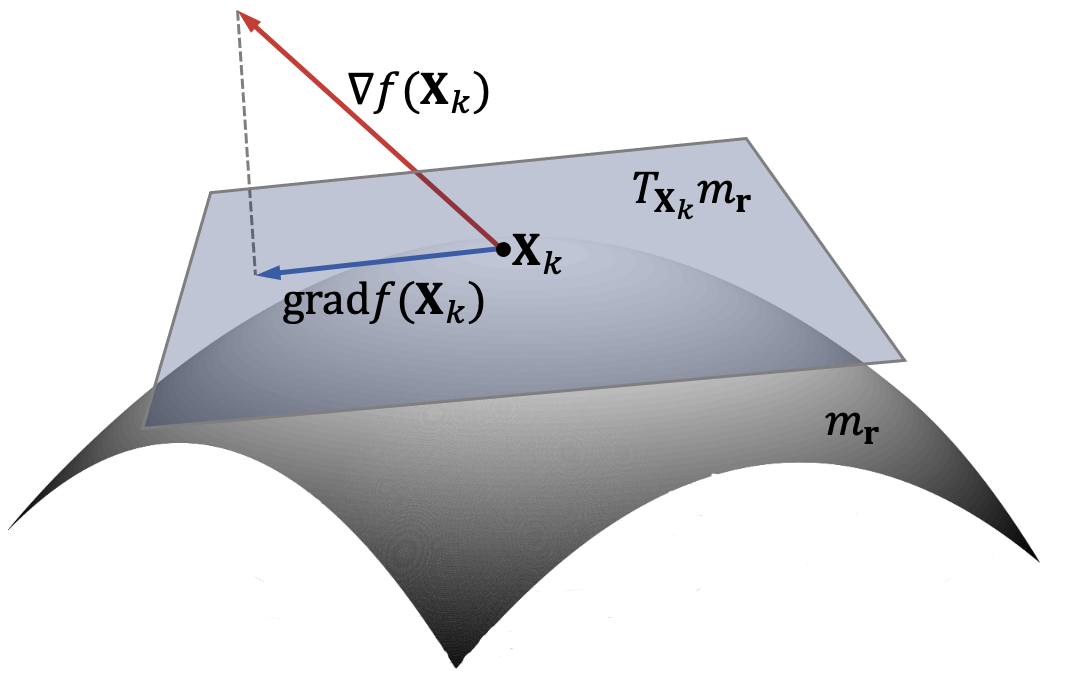}
        \label{fig:riemann grad}}
  \subfloat[]
    { \includegraphics[width = 0.3\linewidth]{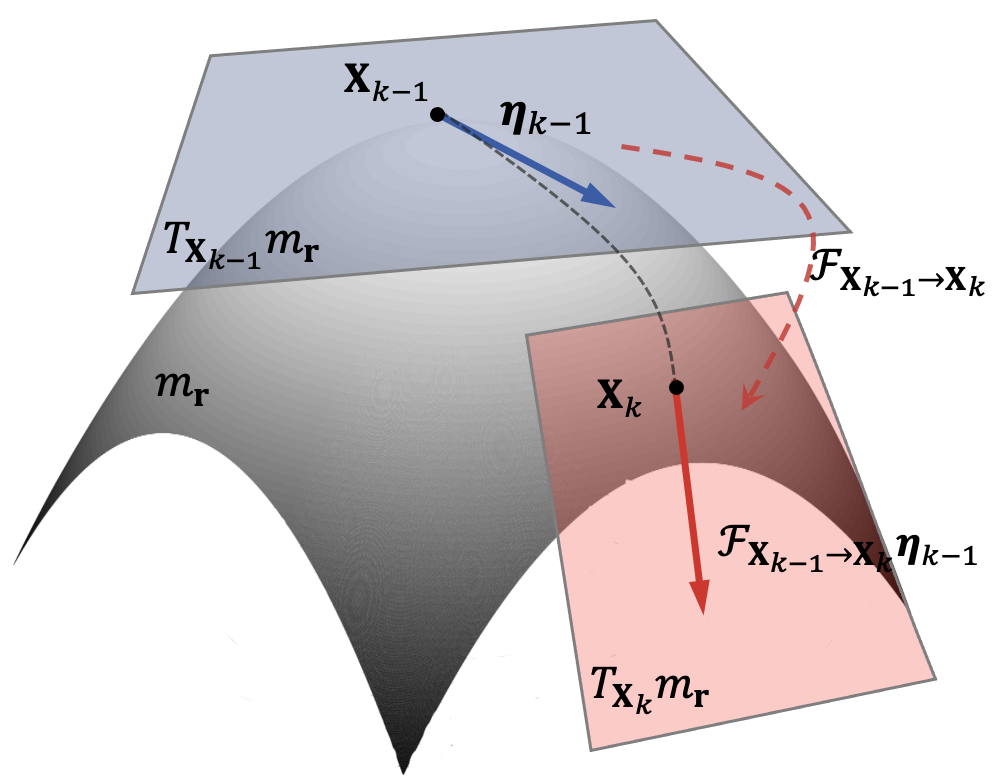}\label{fig:vector tran}}
   \subfloat[]
    {  \includegraphics[width = 0.3\linewidth]{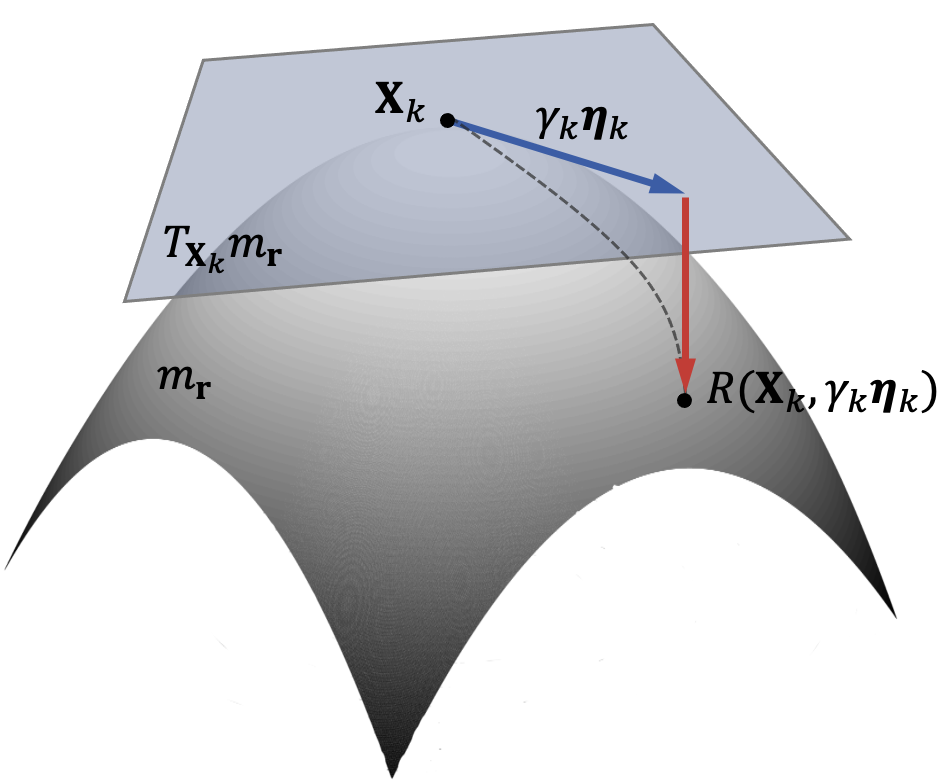}\label{fig:retraction}}
  \caption{Different projections in a Riemannian manifold: (a) Riemann gradient; (b) Vector transport; (c) Retraction}
  \label{fig:vector tran and retraction}
\end{figure*}
In the conjugate gradient method under Riemann optimization, we need to update the current descent direction with considering the direction of the previous iteration.  However as shown in Figure \ref{fig:vector tran}, the descent direction is on the tangent plane of the current point. In order to consider the descent direction of the previous iteration, we need to project it on the tangent plane of the current point, i.e., define the vector transport $\cF_{\bX_{k-1} \rightarrow \bX_{k}}$.

It is worth noting that the tangent plane $T_{\bX}\cM_{\br}$ in the previous iteration is a subset of Euclidean space, so the direction of previous iteration can also be regarded as a tensor in Euclidean space which can be updated in the current tangent plane by Equation (\ref{eq:euc to rie}) as follows
\begin{equation}
    \cF_{\mathbf{X}_{k-1} \rightarrow \mathbf{X}_{k}} \bm{\eta}_{k-1}=\mathbf{P}_{T_{\mathbf{X}_{k}} \mathcal{M}_{\mathbf{r}}}\left(\bm{\eta}_{k-1}\right)
    \label{eq:vector trans}.
\end{equation}
Here $\bm{\eta}_{k-1}$ is the descent direction of the last iteration which is defined on the last tangent plane. By using Equation (\ref{eq:vector trans}), we can get the $\cF_{\mathbf{X}_{k-1} \rightarrow \mathbf{X}_{k}} \bm{\eta}_{k-1}$ that is the projection of the last decent direction onto the current tangent plane. 

\subsubsection{Conjugate Direction $\boldsymbol{\eta}_k$ And Step Size $\gamma_k$}
With the vector transport, we can linearly combine the previous descent direction with the current Riemann gradient to get the current descent direction, i.e.,
\begin{equation*}
    \bm{\eta}_k =  - \mathrm{grad}f(\bX_k) + \beta_k \cF_{\bX_{k-1} \rightarrow \bX_{k}}\bm{\eta}_{k-1},
\end{equation*}
where the linear combination coefficient $\beta_k$ is calculated as following,
\begin{equation*}
    \beta_{\bX_k} = \max\{0, \frac{\langle \mathrm{grad}f(\bX_k), \mathrm{grad}f(\bX_k) - \zeta_{\bX_{k-1} \rightarrow \bX_{k}} \rangle}{||\mathrm{grad}f(\bX_{k-1})||_{F}^2} \},
\end{equation*}
where $\zeta_{\bX_{k-1} \rightarrow \bX_{k}} =  \cF_{\bX_{k-1} \rightarrow \bX_{k}}\mathrm{grad}f(\bX_{k-1})$.
After getting the descent direction, it is necessary to calculate the optimal step size which can be obtained by solving the following optimal problem,
\begin{equation*}
    \gamma_k = \arg \min_{\gamma} f(\bX_k + \gamma \bm{\eta}_k).
\end{equation*}
It has a closed-form solution as 
\begin{equation*}
    \gamma_k = \frac{\langle \mathcal{P}_{\boldsymbol{\Omega}}\bm{\eta}_k, \mathcal{P}_{\boldsymbol{\Omega}}(\bX_k - \bY)\rangle}{\langle \mathcal{P}_{\boldsymbol{\Omega}}\bm{\eta}_k, \mathcal{P}_{\boldsymbol{\Omega}}\bm{\eta}_k \rangle}.
\end{equation*}

Then we can get the value of the next target point on the tangent plane.

\subsubsection{Retraction $R(\cdot)$}
All the current operations are performed on the tangent plane of the Riemann manifold. So at the end of each iteration, the retraction $R(\cdot)$ is to project the points on the tangent plane back onto the Riemann manifold, as shown in Figure \ref{fig:retraction}. 
To satisfy our symmetry constraints for the first two basis matrices, we introduce the Symmetry-HOSVD (SHOSVD) as shown in Algorithm \ref{alg:SHOSVD} where $SVD_{i}(\mathbf{A})$ denotes the first $i$ columns of the left eigenvectors of the SVD decomposition of $\mathbf{A}$. Then we define $R(\bX_k, \gamma_k \bm{\eta}_k) = SHOSVD(\bX_k + \gamma_k \bm{\eta}_k)$. 

\begin{algorithm}[hbt]
	\caption{Sysmmetry-HOSVD} 
	\label{alg:SHOSVD}
	\begin{algorithmic}
		\REQUIRE 
		tensor to be decomposed $\bX \in \bbR^{m \times m \times L \times N}$ \\
        targeted decomposition dimension of the core tensor $\br = [s, s, K, N]$
		\ENSURE 
		decomposition result $SHOSVD(\bX) = \bB \times_1 \bPhi \times_2 \bPhi \times_3 \bG$
		
        \STATE $\bPhi = SVD_{s}\left( \frac{\bX_{(1)} + \bX_{(2)}}{2}\right)$
        \STATE $\bG = SVD_{K}(\bX_{(3)})$
        \STATE $\bB = \bX \times_1 \bPhi^T \times_2 \bPhi^T \times_3 \bG^T$
         \STATE$SHOSVD(\bX) = \bB \times_1 \bPhi \times_2 \bPhi \times_3 \bG$
	\end{algorithmic}
\end{algorithm}


\begin{assumption}
    \label{ass:converage}
    Define $\bX_k$ as the estimation of the $k$th iteration by Algorithm \ref{alg:riemann_CG}, we have 
    \begin{equation*}
        \lim\limits_{k \to \infty} \sigma_{r_i}(\bX_{(i)}) \neq 0, \quad i = 1,2,3,4,
    \end{equation*}
    where $\sigma_j(\bX_{(i)})$ denotes the $j$th biggest singular value of $\bX_{(i)}$.
\end{assumption}
Assumption \ref{ass:converage} states that as $k\to \infty$, $\bX_{(i)}$ has rank $r_i$. This is intuitive because, according to Assumption \ref{ass:r<R}, $\br \leq \br^0$, and $\bY$ also includes the influence of full-rank noise, $\bE$. Therefore, as the algorithm converges, it is impossible to use a lower-rank tensor to estimate $\bY$.

\begin{theorem}
\label{the:local converage}
    Define  $f(\bX_k)$ is the objection function of Equation (\ref{eq:FEN model opti}) by replacing $\bX$ by $\bX_k$. Under Assumption \ref{ass:converage}, we have
    \begin{equation}
        \lim\limits_{k \to \infty} || \mathrm{grad}f(\bX_k)||_F = 0.
    \end{equation}
\end{theorem}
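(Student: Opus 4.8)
The plan is to prove Theorem~\ref{the:local converage} as a global convergence result for the Riemannian conjugate gradient recursion in Algorithm~\ref{alg:riemann_CG}, by adapting the classical Zoutendijk and Gilbert--Nocedal argument for the Polak--Ribi\`ere--Polyak$^{+}$ scheme to the manifold $\cM_\br$. First I would record the two structural facts that make the machinery run. The cost $f$ in Equation~(\ref{eq:FEN model opti}) is smooth on $\cM_\br$ and bounded below by $0$, since it is a sum of squared Frobenius norms. Because each $\bm{\eta}_k$ is a descent direction and the step size $\gamma_k=\arg\min_\gamma f(\bX_k+\gamma\bm{\eta}_k)$ realises the minimum of $f$ along it, the sequence $\{f(\bX_k)\}$ is non-increasing; bounded below, it converges, so that $f(\bX_k)-f(\bX_{k+1})\to 0$. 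This also traps every iterate in the sublevel set $\{\bX\in\cM_\br: f(\bX)\le f(\bX_0)\}$, on which $\|\bX_k\|_F$ is bounded.

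The crucial role of Assumption~\ref{ass:converage} is to keep the iterates away from the singular boundary of $\cM_\br$, where the fixed-multilinear-rank manifold fails to be closed and the geometric operators degenerate. Concretely, the projection in Equation~(\ref{eq:euc to rie}) and the vector transport in Equation~(\ref{eq:vector trans}) both involve the pseudo-inverses $\mathbf{B}_{(j)}^{\dagger}$ and the projectors built from $\bPhi,\bG$, whose conditioning is controlled by the smallest singular values $\sigma_{r_i}(\bX_{(i)})$. Assumption~\ref{ass:converage} guarantees that these are bounded away from $0$ along the trajectory, so that together with the sublevel boundedness of the first step the iterates remain in a compact subset of $\cM_\br$. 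On such a set I would argue that the Riemann gradient map $\bX\mapsto\mathrm{grad}f(\bX)$ is Lipschitz, that the retraction $R$ satisfies the standard first-order retraction bounds uniformly, and that the projection-based transport $\cF_{\bX_{k-1}\rightarrow\bX_k}$ has uniformly bounded operator norm. These uniform constants are exactly what the convergence estimates require.

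With the uniform constants in hand, the next step is to establish a sufficient-descent inequality $\langle\mathrm{grad}f(\bX_k),\bm{\eta}_k\rangle\le -c_0\|\mathrm{grad}f(\bX_k)\|_F^2$ for some $c_0>0$. The truncation $\beta_k=\max\{0,\cdot\}$ in the definition of $\bm{\eta}_k$ is what makes this attainable: it rules out the direction reversal that can destroy descent in the unmodified Polak--Ribi\`ere scheme, and combined with the exact line search it yields the bound by the usual induction on $k$. From the decrease $f(\bX_k)-f(\bX_{k+1})\to 0$ and the Lipschitz gradient of the previous step, a Riemannian Zoutendijk condition $\sum_k \langle\mathrm{grad}f(\bX_k),\bm{\eta}_k\rangle^2/\|\bm{\eta}_k\|_F^2<\infty$ then follows in the standard way.

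Finally I would conclude by contradiction in the Gilbert--Nocedal style. Assuming $\|\mathrm{grad}f(\bX_k)\|_F\not\to 0$, there exist $\epsilon>0$ and a subsequence with $\|\mathrm{grad}f(\bX_k)\|_F\ge\epsilon$; using the bounded transport and the formula for $\beta_k$ one bounds the growth of $\|\bm{\eta}_k\|_F$ and shows that the Zoutendijk series must diverge, a contradiction, forcing $\|\mathrm{grad}f(\bX_k)\|_F\to 0$. I expect the main obstacle to be the intermediate steps rather than the final bookkeeping: the vector transport in Equation~(\ref{eq:vector trans}) is defined as a bare orthogonal projection onto the new tangent plane and is \emph{not} isometric, so the usual convergence theorems do not apply verbatim. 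Controlling the distortion introduced by this non-isometric transport, and certifying that the SHOSVD retraction of Algorithm~\ref{alg:SHOSVD} is a bona fide retraction with uniform local bounds, is precisely where Assumption~\ref{ass:converage}, by excluding rank-deficient limit points, has to be invoked to recover the uniform estimates the argument depends on.
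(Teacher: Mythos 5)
Your overall strategy is sound and you correctly identify the real role of Assumption \ref{ass:converage} (keeping the iterates away from the rank-deficient boundary so that the sublevel set is compact \emph{inside} $\cM_\br$ and the projectors, pseudo-inverses and transport stay uniformly well conditioned), but your route is genuinely different from, and far more laborious than, the paper's. The paper's argument is only a few lines: monotone decrease of $f$ traps all iterates in the sublevel set $\mathcal{Q}=\{\bX\in\cM_\br\,:\,0\le f(\bX)\le f(\bX_0)\}$, which is compact under Assumption \ref{ass:converage}; if $\|\mathrm{grad}f(\bX_k)\|_F$ did not tend to $0$, a subsequence with gradient norm at least $\delta>0$ would have an accumulation point $\bX_*$ with $\|\mathrm{grad}f(\bX_*)\|_F\ge\delta$, contradicting Theorem 4.3.1 of Absil, Mahony and Sepulchre (2009), which asserts that every accumulation point of the line-search scheme is a critical point. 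You instead rebuild the Zoutendijk plus Gilbert--Nocedal machinery for the PR$^{+}$ update from scratch: sufficient descent from the $\max\{0,\cdot\}$ truncation, a Riemannian Zoutendijk series, and a contradiction via the growth bound on $\|\bm{\eta}_k\|_F$. What your approach buys is self-containedness: the cited Theorem 4.3.1 requires the search directions to be gradient-related and the steps to satisfy an Armijo-type condition, neither of which is verified in the paper for a CG direction with a non-isometric projection transport and an ambient-space exact line search followed by an SHOSVD retraction --- precisely the gaps you flag. What it costs is that those same gaps become \emph{your} burden: sufficient descent for PR$^{+}$ relies on the orthogonality $\langle\mathrm{grad}f(\bX_{k+1}),\cF_{\bX_k\to\bX_{k+1}}\bm{\eta}_k\rangle=0$, which fails here because the line search is performed along the straight line in the ambient space before retraction, so your ``usual induction'' step is not automatic and would need a quantitative perturbation argument. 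Neither proof is fully airtight on these points, but yours at least names them; if you complete the uniform-constant and sufficient-descent estimates, your argument would be the more rigorous of the two.
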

\begin{proof}
    The proof of Theorem \ref{the:local converage} is shown in the Appendix.
\end{proof}
Theorem \ref{the:local converage} shows that Algorithm \ref{alg:riemann_CG} converges at least to a saddle point.

\section{SIMULATION}
\label{sec:simulation}
In the simulation study, we evaluate the performance of FEN and compare it with some state-of-the-art methods. 
\subsection{Introduction Of The Baseline Methods}
As we have introduced in Section \ref{sec:literature}, we consider three types of methods: PCA decomposition, dynamic network modeling, and tensor completion. 

\textbf{PCA-based methods}
\begin{itemize}
    \item VPCA \cite{VPCA} conducted PCA decomposition by concatenating discrete observations of functions from different edges into a longer vector and conducting PCA decomposition. It cannot consider spatial relationships between edges and cannot handle irregularly observed functional data.
    \item MFPCA \cite{MFPCA} conducted PCA decomposition by regarding functions from different edges as the repeated samples of one function and conducting functional PCA. It cannot consider spatial relationships between edges and cannot handle irregularly observed functional data.
    \item SIFPCA \cite{SIFPCA} conducted PCA decomposition by regarding functions from different edges as the repeated samples of one function and conducting functional PCA. It estimated the covariance function by kernel smooth method and conducted the PCA decomposition of the estimated covariance function to do the completion. In this way, it can handle irregularly observed functional data. However, its computational complexity is so high that it is difficult to use in large-scale data.
\end{itemize}

\textbf{Dynamic network modeling-based methods}
\begin{itemize}
    \item MTR \cite{MTR} regarded the observed values of all functions at a time point as a tensor and performed multilinear tensor regression to do the estimation. It considers the spatial relationships between edges, but cannot handle irregularly observed functional data.
    
    \item LDS and MLDS \cite{MLDS}, which are similar to MTR, both treated the functional network as a tensor time series. Yet they use the linear dynamic system or multilinear dynamic system model to do the estimation. So they can not handle irregularly observed functional data as well.
\end{itemize}

\textbf{Tensor completion-based methods}
\begin{itemize}
    \item SPC \cite{CPDecSmooth} treated the functional edges as discrete observations and then conducted CP decomposition with smoothing constraints for adjacency tensor completion and modeling. It does not consider the network community structure. It can handle irregular observation functional data by treating unobserved irregular points as missing data for completion. 
    \item t-TNN \cite{tTNN} conducted the completion by improving the t-SVD decomposition. It treats the functional edges as discrete observations and changes the order of dimensions to do the t-SVD decomposition. But similarly to SPC, it does not consider the network community structure.
\end{itemize}
We compare the above methods with FEN. For the baselines that cannot handle irregular observations, we first use interpolation to get regular observations, and then compare with FEN. 

\subsection{Data Generation}
We introduce data generation for our simulation study. 
According to Equation (\ref{eq:tucker for X}), we need to first generate the core tensor $\bB$, discrete basis matrix $\bPhi$ and functional basis matrix $\mathcal{G}$. 
\begin{algorithm}
	\caption{Generate Core Tensor} 
	\label{alg:generate B}
	\begin{algorithmic}
		\REQUIRE 
		dimension of tensor core  $\br = [s,s,K]$
		
		\ENSURE 
		tensor core $\mathbf{B}$
		
        \STATE $k = 1$
		\WHILE{} 
		    \STATE generate orthogonal matrix $\mathbf{B}_k \in \bbR ^{s \times s}$
		    \STATE $\mathbf{B}_{[i_3 = k]} = \mathbf{B}_k$
		    \IF{Tensor $\mathbf{B}_{[i_3 = 1:k]}$ is full rank in every direction}
		        \STATE $k = k + 1$
		    \ENDIF
			\IF{$k \geq K$}
		        \STATE break
		    \ENDIF
		\ENDWHILE
	\end{algorithmic}
\end{algorithm}

\begin{itemize}
    \item The formation of the core tensor $\bB$ must satisfy that its matrix form has full rank in all directions, i.e., $\textbf{rank}(\mathbf{B}_{(1)}) = \textbf{rank}(\mathbf{B}_{(2)}) = s, \textbf{rank}(\mathbf{B}_{(3)})= K$, where $s,K$ are the hyperparameters. To meet these conditions, we use Algorithm \ref{alg:generate B} to generate $\bB$.
    \item For functional basis matrix $\bPhi$, we randomly generate orthogonal matrix with order $m$ and take the first $s$ columns of it as $\bPhi$.
    \item For functional basis matrix $\mathcal{G}$, we choose Fourier bases $G_{(t)k} = \mathcal{g}_k(t) = sin(k\pi t), k = 1,\cdots, K$ defined on $[-1, 1]$.
\end{itemize}

After getting functional tensor $\mathcal{X}$, we can set the $L$ observation points as equally spaced on $[T_s, T_e]$ to get $\bX$. Next, we add observation noise to $\bX$ by generating a random noise tensor $\mathbf{E}$ from normal distribution with mean 0 and variance $\sigma^2$.

To generate irregular observations, we randomly select a percentage $\omega$ of observations in the fully observed tensor $\bY^F$ to be missing and set the corresponding entries in the binary mask tensor $\boldsymbol{\Omega}$ to 0 while setting the remaining entries to 1. This yields a set of functional network data with irregularly missing observations for our simulation experiments. Figure \ref{fig:miss} shows the generated observation data with different missing percentages.

\begin{figure}
  \centering
  \subfloat[]
    {\includegraphics[width=0.45\linewidth]{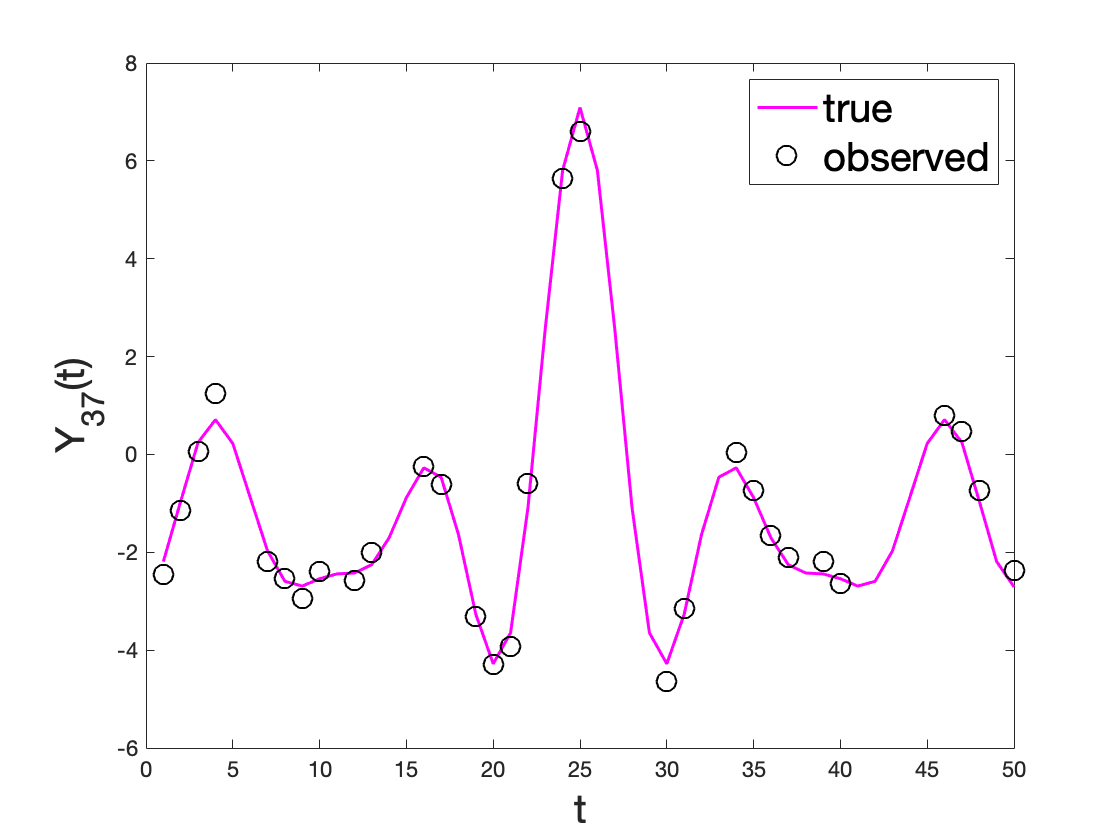}\label{fig:miss-a}}
  \subfloat[]
    {\includegraphics[width=0.45\linewidth]{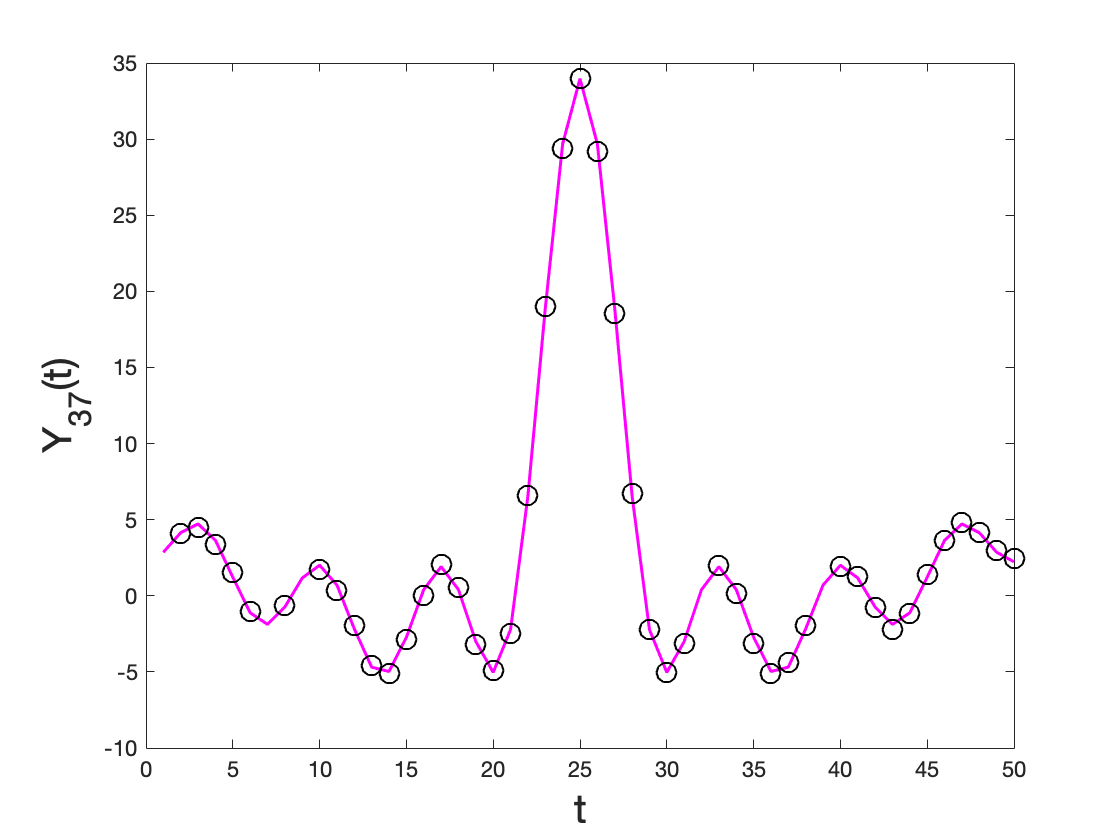}\label{fig:miss-b}}
  \caption{Comparison of observation data generated by different missing percentages: (a)$\omega = 40\%$; (b) $\omega = 10\%$}
  \label{fig:miss}
\end{figure}

\subsection{Small-Scale Simulation}
\label{sec:smallscale}

\begin{table}
	\centering
	\caption{Small-scale simulation parameter setting }
	\resizebox{\linewidth}{!}{
	\begin{tabular}{lccccccc}
		\hline
		\textbf{Parameter} & $dim(\mathcal{X})$ & $L$ & $dim(\bB)$ & $[s,s,K]$   \\
		\textbf{Value} & [10, 10, [-1,1]]& 50 & [3,3,8]& [3, 3, 8]     \\
		\hline 
            \hline
        \textbf{Parameter}&$\sigma^2$ & $\alpha_k$ & $\omega$ \\
        \textbf{Value} & 0.01,0.1,0.2 & 0.1 & 40\% , 30\% , 20\% , 10\% \\
        \hline
	
\end{tabular}}
\label{tab:simulation small 1}
\end{table}

The parameter setting is shown in the Table \ref{tab:simulation small 1}. For each method and each setting of parameters, we generate data to run the method and calculate the sum of squares of fitting errors $SE =||\hat{\bX} - \bX ||_{F}^2$. We repeat the experiments for $N=20$ replications. Table \ref{tab:simout small 1} -Table \ref{tab:simout small 3} show the mean of $SE$, i.e., $MSE$ for each method under different $\sigma^2$ where the standard deviation of $MSE$ is shown in parenthesis.
\begin{table}
    \centering
    \caption{$MSE(\times 10^{-1})$ (with its standard deviation in parenthesis)of different algorithms when $\sigma^2 = 0.01$ at small-scale simulation}
    \resizebox{\linewidth}{!}{
    \begin{tabular}{ccccc}
    \hline
        \textbf{Algorithm} & $\omega  = 40\%$ & $\omega = 30\%$ & $\omega  = 20\%$ & $\omega  = 10\%$ \\ \hline
        \textbf{VPCA} & 73.0(129) & 11.9(5.65) & 3.85(2.65) & 0.611(0.133)  \\ 
        \textbf{MFPCA} & 50.7(38.4) & 15.0(7.75) & 6.83(17.4) & 0.833(0.832)  \\ 
        \textbf{SIFPCA} & 442(48.5) & 441(46.9) & 439(47.5) & 438(47.5)  \\ 
        \textbf{SPC} & 0.032(0.019) & 0.018(0.009) & 0.010(0.006) & 0.004(0.002)  \\ 
        \textbf{t-TNN} & 4.45(5.38) & 0.620(1.14) & 0.067(0.252) & 0.001(0.002)  \\ 
        \textbf{LDS} & 684(423) & 494(118) & 516(130) & 516(116)  \\ 
        \textbf{MLDS} & 271(500) & 36.2(49.1) & 7.12(5.11) & 4.19(7.19)  \\ 
        \textbf{MTR} & 221(130) & 178(128) & 131(93.5) & 128(102)  \\ 
        \textbf{FEN} & \textbf{0.001(0.0001)} & \textbf{0.001(0.00002)} & \textbf{0.001(0.00003)} & \textbf{0.0009(0.00002)} \\  \hline
    \end{tabular}
    }
    \label{tab:simout small 1}
\end{table}

\begin{table}
    \centering
    \caption{$MSE(\times 10^{-1})$ (with its standard deviation in parenthesis) of different algorithms when $\sigma^2 = 0.1$ at small-scale simulation}
    \resizebox{\linewidth}{!}{
    \begin{tabular}{ccccc}
    \hline
        \textbf{Algorithm} & $\omega  = 40\%$ & $\omega  = 30\%$ & $\omega  = 20\%$ & $\omega  = 10\%$ \\ \hline
        \textbf{VPCA} & 35.4(17.0) & 13.8(9.01) & 4.13(2.33) & 0.642(0.278)  \\ 
        \textbf{MFPCA} & 39.2(16.3) & 22.7(47.6) & 5.81(12.3) & 0.581(0.213)  \\ 
        \textbf{SIFPCA} & 443(48.5) & 441(46.9) & 439(47.5) & 439(47.5)  \\ 
        \textbf{SPC} & 0.186(0.139) & 0.101(3.16e-2) & \textbf{0.058(0.023)} & \textbf{0.024(0.023)}  \\ 
        \textbf{t-TNN} & 8.81(11.8) & 1.86(2.80) & 0.247(0.346) & 0.041(0.019)  \\ 
        \textbf{LDS} & 217e1(719e1) & 550(133) & 573(145) & 503(116)  \\ 
        \textbf{MLDS} & 984(303e1) & 30.8(28.1) & 6.55(4.95) & 3.53(7.52)  \\ 
        \textbf{MTR} & 229(135) & 172(128) & 179(143) & 154(123)  \\ 
        \textbf{FEN} & \textbf{0.099(0.002)} & \textbf{0.095(0.002)} & 0.094(0.001) & 0.092(0.002) \\ \hline
    \end{tabular}
   }
    \label{tab:simout small 2}
\end{table}

\begin{table}
    \centering
    \caption{$MSE(\times 10^{-1})$ (with its standard deviation in parenthesis) of different algorithms when $\sigma^2 = 0.2$ at small-scale simulation}
    \resizebox{\linewidth}{!}{
    \begin{tabular}{ccccc}
    \hline
        \textbf{Algorithm} & $\omega  = 40\%$ & $\omega  = 30\%$ & $\omega = 20\%$ & $\omega  = 10\%$ \\ \hline
        \textbf{VPCA} & 196(574) & 10.4(3.31) & 5.04(3.30) & 0.623(0.158)  \\ 
        \textbf{MFPCA} & 58.7(49.5) & 14.5(8.95) & 3.08(1.05) & 0.834(0.336)  \\ 
        \textbf{SIFPCA} & 442(48.6) & 442(47.0) & 439(47.5) & 439(47.5)  \\ 
        \textbf{SPC} & 1.81(0.466) & 0.903(0.262) & 0.408(0.087) & \textbf{0.142(0.025)}  \\ 
        \textbf{t-TNN} & 6.13(6.90) & 1.49(0.638) & 0.714(0.899) & 0.141(0.036)  \\ 
        \textbf{LDS} & 704(681) & 534(128) & 522(149) & 525(114)  \\ 
        \textbf{MLDS} & 256(655) & 17.4(8.69) & 10.3(9.80) & 2.67(5.00)  \\ 
        \textbf{MTR} & 233(127) & 177(136) & 124(122) & 124(122)  \\ 
        \textbf{FEN} & \textbf{0.397(0.011)} & \textbf{0.381(0.010)} & \textbf{0.375(0.007)} & 0.369(0.007) \\ \hline
    \end{tabular}
   }
    \label{tab:simout small 3}
\end{table}

We observe that the $MSE$ of each method decreases as the variance of observation noise $\sigma^2$ or the missing percentage $\omega$ decreases, which is in line with our expectations. FEN outperforms the other baseline methods and is less sensitive to the missing percentage. Since VPCA, MFPCA, and SIFPCA treat different functions as repeated samples of the same function, which loses the topological structure of the graph data, they have quite poor performance. As to LDS, MLDS, and MTR, they cannot handle irregular observations. Consequently, they have to first interpolate the data as a preprocessing step, and then be applied for analysis, which leads to suboptimal results. Among them, MLDS performs the best but is still strongly affected by the missing percentage. The other two tensor completion algorithms, SPC and t-TNN, have a bit worse performance than FEN. Since they do not consider the graph community structure, their interpretability is not as good as FEN.

\subsection{Large-Scale Simulation}
\label{sec:bigscale}
Following almost the same setting as Section \ref{sec:smallscale}, Table \ref{tab:simulation big 1} shows the parameters for our large-scale simulation.

\begin{table}[bht]
	\centering
	\caption{Large-scale simulation parameter setting}
	\resizebox{\linewidth}{!}{
	\begin{tabular}{lccccccc}
		\hline
		\textbf{Parameter} & $dim(\mathcal{X})$ & $L$ & $dim(\bB)$ & $[s,s,K]$ \\
		\textbf{Value} & [50, 50, [-1,1]]& 100 & [15,15,25]& [15, 15, 25] &   \\
		\hline
            \hline
            \textbf{Parameter}&$\sigma^2$ & $\alpha_k$ & $\omega $ \\
            \textbf{Value} & 0.01,0.1,0.2 & 0.1 & 40\% , 30\% , 20\% , 10\% \\
            \hline
\end{tabular}}
\label{tab:simulation big 1}
\end{table}
For comparison, we also calculate the $MSE$ and its standard deviation based on 20 experiment replications. The simulation results are shown in Table \ref{tab:simout big 1} -Table \ref{tab:simout big 3}.

\begin{table}
    \centering
    \caption{$MSE(\times 10)$ (with its standard deviation in parenthesis) of different algorithms when $\sigma^2 = 0.01$ at large-scale simulation}
    \resizebox{\linewidth}{!}{
    \begin{tabular}{ccccc}
    \hline
        \textbf{Algorithm} & $\omega  = 40\%$ & $\omega  = 30\%$ & $\omega  = 20\%$ & $\omega  = 10\%$ \\ \hline
        \textbf{VPCA}& 218(277) & 79.3(22.9) & 15.0(1.87) & 4.03(0.349)  \\ 
        \textbf{MFPCA} & 178(70.2) & 46.3(5.38) & 16.2(2.82) & 4.02(0.446)  \\ 
        \textbf{SPC} & 0.516(0.089) & 0.298(0.013) & 0.158(0.005) & 0.068(0.004)  \\ 
        \textbf{t-TNN} & 1.30(0.378) & 0.008(0.009) & 0.0006(0.0002) & 0.0002(0.00004)  \\ 
        \textbf{LDS} & 86.4(25.1) & 65.2(1.32) & 58.4(0.729) & 55.4(0.599)  \\ 
        \textbf{MLDS} & 127(73.1) & 71.7(19.4) & 56.6(6.29) & 53.1(0.551)  \\ 
        \textbf{MTR} & 144(25.6) & 97.0(11.7) & 73.2(3.16) & 62.2(1.88)  \\ 
        \textbf{FEN} & \textbf{0.0(0.0)} & \textbf{0.0(0.0)} & \textbf{0.0(0.0)} & \textbf{0.0(0.0)} \\ \hline
    \end{tabular}
   }
     \label{tab:simout big 1}
\end{table}

\begin{table}
    \centering
    \caption{$MSE(\times 10)$ (with its standard deviation in parenthesis) of different algorithms when $\sigma^2 = 0.1$ at large-scale simulation}
    \resizebox{\linewidth}{!}{
    \begin{tabular}{ccccc}
    \hline
        \textbf{Algorithm} & $\omega  = 40\%$ & $\omega  = 30\%$ & $\omega  = 20\%$ & $\omega = 10\%$ \\ \hline
        \textbf{VPCA} & 165(92.8) & 64.2(71.9) & 15.7(2.14) & 4.12(0.342)  \\ 
        \textbf{MFPCA} & 189(131) & 50.2(14.7) & 15.2(1.63) & 3.98(0.278)  \\ 
        \textbf{SPC} & 0.506(0.015) & 0.295(0.010) & 0.157(0.006) & 0.690(0.0260)  \\ 
        \textbf{t-TNN} & 1.23(0.269) & 0.005(0.002) & 0.001(0.0002) & 0.004(0.0)  \\ 
        \textbf{LDS} & 86.5(25.7) & 65.2(1.32) & 58.4(0.728) & 55.4(0.599)  \\ 
        \textbf{MLDS} & 127(73.2) & 72.0(19.3) & 56.7(6.50) & 53.1(0.552)  \\ 
        \textbf{MTR} & 153(48.4) & 96.8(15.6) & 73.7(2.22) & 62.3(1.38)  \\ 
        \textbf{FEN} & \textbf{0.0010(0.0)} & \textbf{0.0010(0.0)} & \textbf{0.0010(0.0)} & \textbf{0.0010(0.0)} \\ \hline
    \end{tabular}
    }
    \label{tab:simout big 2}
\end{table}

\begin{table}
    \centering
    \caption{$MSE(\times 10)$ (with its standard deviation in parenthesis) of different algorithms when $\sigma^2 = 0.2$ at large-scale simulation}
    \resizebox{\linewidth}{!}{
    \begin{tabular}{ccccc}
    \hline
        \textbf{Algorithm} & $\omega  = 40\%$ & $\omega  = 30\%$ & $\omega = 20\%$ & $\omega  = 10\%$ \\ \hline
        \textbf{VPCA} & 173(96.4) & 46.6(13.8) & 17.0(6.06) & 4.14(0.404)  \\ 
        \textbf{MFPCA} & 234(320) & 64.1(43.2) & 15.1(2.92) & 4.03(0.234)  \\ 
        \textbf{SPC} & 2.88(0.072) & 1.64(0.047) & 0.853(0.024) & 0.334(0.009)  \\ 
        \textbf{t-TNN} & 1.25(0.285) & 0.030(0.019) & 0.005(0.0005) & \textbf{0.002(0.00004)}  \\ 
        \textbf{LDS} & 86.6(26.5) & 65.2(1.32) & 58.4(0.728) & 55.4(0.599)  \\ 
        \textbf{MLDS} & 127(73.2) & 72.2(19.2) & 56.8(6.70) & 53.1(0.552)  \\ 
        \textbf{MTR} & 127(72.4) & 101(9.03) & 73.6(4.86) & 62.5(2.36)  \\ 
        \textbf{FEN} & \textbf{0.004(0.00001)} & \textbf{0.004(0.0)} & \textbf{0.004(0.00001)} & 0.004(0.00001) \\ \hline
    \end{tabular}
    }
    \label{tab:simout big 3}
\end{table}
The large-scale simulation experiments further demonstrate the superiority of FEN over other algorithms, especially as the network dimension and number of observation points increase. Furthermore, the model is robust to different percentages of missing data. For the other baselines, their performances are also similar as those in the small-scale simulations.  


\section{CASE STUDY}
\label{sec:case}
In this section, we consider two datasets in urban metro transportation to evaluate the performance of FEN and its baselines. As we know, understanding passenger movement patterns, and exploring the spatial relationships of metro stations, can help better public transportation management and land use planning. We can formulate the metro network as a graph, by treating each station as a node, and real-time passenger flows between different Origin-Destination (O-D) paths as edges. As Figure \ref{fig:example6} shows, each passenger flow curve has a very smooth profile and can be regarded as functional data. Consequently, all the O-D path flows formulate a functional-edged graph. However, the observation points of the passenger flows for different O-D paths are different, resulting in irregularly observed functions. By analyzing such data, we hope to identify the urban centers, hubs, and socioeconomic clusters based on network centralities and community structures.

\subsection{Hong Kong Metro System Data}

\begin{figure*}
  \centering
  \includegraphics[width = 0.75\linewidth]{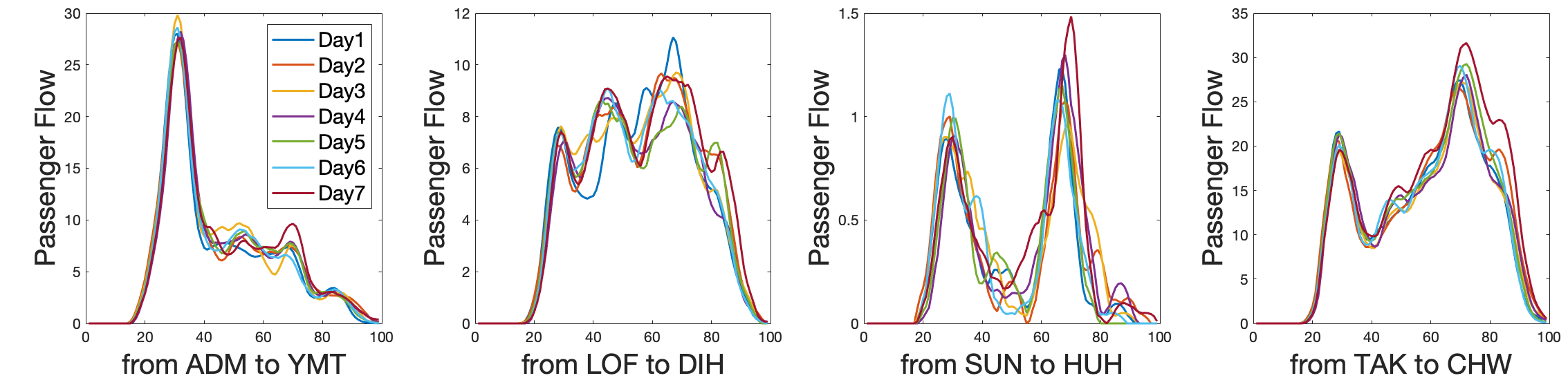}
  \caption{Passenger flow functions between some O-D paths of the Hong Kong metro system}
  \label{fig:example6}
\end{figure*}
The Hong Kong metro system has in total 90 stations, i.e., nodes. The passenger flows between any two stations from 5:00 am to 12:00 pm every day are treated as functional edges. We have in total 24 days' data for analysis. It is noted that if two stations have accumulated passenger flows of less than 20 each day on average, we remove the edge between these two stations. The parameter settings of FEN for the dataset are shown in
Table \ref{tab:HK para}, in which the hyperparameter $\br = [s,s,K,N]$ is the optimal hyperparameter selected by pre-examination. We set the size of the observation points $L = 99$. The missing data percentage in reality is 10\%. To better evaluate the performance of FEN, we further consider different values of $\omega$ by randomly removing some observations.

\begin{table}
	\centering
	\caption{Parameter setting for the Hong Kong metro system data}
	\resizebox{\linewidth}{!}{
	\begin{tabular}{lccccc}
		\hline
		\textbf{Parameter} & $dim(\mathcal{X})$ & $L$ &$[s,s,K,N]$   \\
		\textbf{Value} & [90, 90, [5:00 am, 12:00 pm], 24]& 99  & [23,23,20,24]  \\
		\hline
            \hline
            \textbf{Parameter} & $\omega $ &  $\alpha_k$ \\
            \textbf{Value}    & 40\%, 30\%, 20\%, 10\%&  0.1 \\
            \hline
\end{tabular}}
\label{tab:HK para}
\end{table}

\begin{table*}[bth]
    \centering
    \caption{$MSE$ or $MSE_{test}$ of different algorithms for the Hong Kong metro system data}
    \scalebox{0.8}{
    \begin{tabular}{ccccccc}
    \hline

         & \multicolumn{3}{c}{\textbf{Tensor completion-based methods}} & \multicolumn{3}{c}{\textbf{PCA-based methods}} \\
         & \multicolumn{3}{c}{(with standard deviation of $MSE$ in parenthesis)} &  \multicolumn{3}{c}{(with $MSE_{train}$ in parenthesis)} \\
         
         \cmidrule(lr){2-4}\cmidrule(lr){5-7}
        $\omega $ & \textbf{SPC} & \textbf{t-TNN} & \textbf{FEN} &  \textbf{VPCA} & \textbf{MFPCA} & \textbf{FEN} \\ 
        \hline
        $40\%$ &5.30(0.283) & 11.3(0.358) & \textbf{4.12(0.247)} &  23.0(22.7) & \textbf{12.0(2.63)} & 13.0(4.20)  \\ 
        $30\%$& 4.07(0.225) & 7.83(0.345) & \textbf{4.03(0.236)} &  22.8(22.5) & 11.8(2.62) & \textbf{10.9(5.73)}  \\ 
        $20\%$& 3.79(0.197) & 3.81(0.331) & \textbf{3.61(0.211)} &  22.8(22.5) & 11.7(2.62) & \textbf{8.99(6.57)}  \\ 
        $10\%$& 3.61(0.189) & 3.77(0.214) & \textbf{3.59(0.179)} &  22.8(22.5) & 11.7(2.62) & \textbf{7.79(7.59)} \\ \hline
    \end{tabular}}
    \label{tab:Case HK}
\end{table*}

To better illustrate the fitting performance of FEN, we selected five stations which including three central subway stations and two relatively remote stations. We draw their fitted paired passenger flows under the missing percentage of $\omega = 40\%$ to show that FEN has a very satisfactory fitting performance. The selected stations and fitting results are both shown in the appendix. 

We further compare FEN with other baselines, tensor completion-based methods and PCA-based methods, in two different ways. Due to the ultra-high computational cost, dynamic network modeling-based methods do not participate in the comparison. The results are shown in Table \ref{tab:Case HK}. As the tensor completion methods can not handle multiple samples, we set the sample number $N = 1$ of FEN model and make the comparison. We use the data for 24 days to repeat these algorithms and calculate $SE_{miss}  = ||\mathcal{P}_{\boldsymbol{\Omega}^{C}}(\hat{\bX} - \bX) ||_{F}^2$ for each day. Then we calculate their mean, i.e., $MSE$ and its standard deviation, as shown in the first three columns of Table \ref{tab:Case HK}.
As to the PCA-based algorithms, they can handle multiple samples. So we estimate their model parameters as well as FEN by multiple samples. In particular, we devide the 24 days' data into a training set (20 days) and a test set (4 days). We can calculate the $MSE_{train}$ of PCA-based algorithms and FEN, and control the $MSE_{train}$ of all the methods similar and as small as far as possible, by selecting the proper number of principal components in PCA-based algorithms and $K$ in FEN. Then we fix the estimated PCA loadings and only update the PCA scores for the testing data. For FEN, similarly, we fix $\bPhi,\bG$ as those estimated by the training data, and only update $\mathbf{B}$ for the testing data, i.e., 
\begin{equation}
    \begin{aligned}
    \label{eq:test core}
    \hat{\bB}_{test} &= \bY_{test} \times_1 \hat{\bPhi}^T \times_2 \hat{\bPhi}^T \times_3 \hat{\bG}^T \\
    \hat{\bX}_{test} &= \hat{\bB}_{test} \times_1 \hat{\bPhi} \times_2 \hat{\bPhi} \times_3 \hat{\bG}.
    \end{aligned}
\end{equation}
Then we can calculate $MSE_{test}$ and $MSE_{train}$ of all the methods, as shown in the last columns of Table \ref{tab:Case HK}.

Comparing FEN with tensor completion-based methods, we find that FEN generally outperforms SPC and t-TNN, although the difference is not as big as that in the simulation studies.
Comparing FEN to PCA-based methods, we observe that FEN generally achieves lower $MSE_{test}$ values. We have to mention that VPCA exhibits very poor performance, and its $MSE_{train}$ and $MSE_{test}$ values remain high even though all its principal components are used. Overall, these results suggest that FEN offers advantages over other algorithms. 

\subsection{Singapore Metro System Data }
Compared to the Hong Kong metro system above, the Singapore metro system has in total 153 stations, i.e., nodes, which places higher demands on the adaptability and efficiency of the model. We have in total 16 days' data for analysis.  It is noted that if two stations have accumulated passenger flows of less than 20 each day on average, we remove the edge between these two stations. The parameter settings of FEN for the dataset are shown in Table \ref{tab:SG para}, in which the hyperparameter $\br = [s,s,K,N]$ is the optimal hyperparameter selected by pre-examination.
Similarly, we set the size of the observation points $L = 99$. The missing data percentage in reality is 10\%. To better evaluate the performance of FEN, we further consider different values of $\omega$ by randomly removing some observations.
 
\begin{table}[bht]
	\centering
	\caption{Parameter setting for the Singapore metro system data}
	\resizebox{\linewidth}{!}{
	\begin{tabular}{lcccccc}
		\hline
		\textbf{Parameter} & $dim(\mathcal{X})$ & $L$  & $[s,s,K,N]$  \\
		\textbf{Value} & [153, 153, [5:00 am, 12:00 pm], 16]& 99 & [23,23,25,16]   \\
		\hline
            \hline
            \textbf{Parameter} & $\omega $ & $\alpha_k$\\
            \textbf{Value}  & 40\%, 30\%, 20\%, 10\%  &  0.1 \\
            \hline
  
\end{tabular}}
\label{tab:SG para}
\end{table}

The data processing and comparison method of the baseline algorithms is similar to the Hong Kong metro system and for the PCA-based methods, we use 12 days' data as the train set and 4 days' data as test set. The results are shown in Table \ref{tab:Case Singapore}.

\begin{table*}[tbh]
    \centering
    \caption{$MSE$ or $MSE_{test}$ of different algorithms for the Singapore metro system data}
    \scalebox{0.8}{
    \begin{tabular}{ccccccc}
    \hline
        & \multicolumn{3}{c}{\textbf{Tensor completion-based methods}} &  \multicolumn{3}{c}{\textbf{PCA-based methods}} \\
        & \multicolumn{3}{c}{(with standard deviation of $MSE$ in parenthesis)} &  \multicolumn{3}{c}{(with $MSE_{train}$ in parenthesis)} \\
         \cmidrule(lr){2-4}\cmidrule(lr){5-7}
        $\omega $ & \textbf{SPC} & \textbf{t-TNN} & \textbf{FEN} &  \textbf{VPCA} & \textbf{MFPCA} & \textbf{FEN} \\ \hline
        $40\%$ & 0.601(0.243) & 1.35(0.544) & \textbf{0.265(0.143)} &  1.15(1.15) & \textbf{0.394(0.180)} & 0.476(0.206)  \\ 
        $30\%$&0.572(0.173) & 1.15(0.347) & \textbf{0.259(0.215)} &  1.14(1.14) & 0.393(0.180) & \textbf{0.387(0.217)}  \\ 
        $20\%$&0.571(0.115) & 1.05(0.203) & \textbf{0.268(0.230)} &  1.14(1.14) & 0.392(0.179) & \textbf{0.324(0.233)}  \\ 
        $10\%$&0.528(0.0534) & 0.558(0.0564) & \textbf{0.285(0.253)} &  1.14(1.14) & 0.388(0.178) & \textbf{0.274(0.255)} \\ \hline
    \end{tabular}}
    \label{tab:Case Singapore}
\end{table*}

When comparing FEN to tensor completion-based methods, it is evident that FEN tends to outperform SPC and t-TNN, although the difference is not as pronounced as observed in the simulation studies and the Hong Kong metro system studies.
In contrast, when comparing FEN to PCA-based methods, we consistently observe that FEN achieves lower values of $MSE_{test}$. Notably, VPCA demonstrates poor performance, as its $MSE_{train}$ and $MSE_{test}$ values remain high even when all its principal components are used. In summary, these results collectively indicate that FEN presents clear advantages over other algorithms.

In this section, we compare FEN model with other baseline methods using Hong Kong and Singapore metro system data. We find that FEN model has advantages over the baseline methods in modeling functional network data with irregular observations. 

\section{CONCLUSION}
\label{sec:conlusion}


In this paper, we introduced a novel method, the FEN model, tailored for modeling functional network data. Our model effectively addresses key challenges encountered in functional network analysis, including function smoothing, dimension reduction, and handling irregular observations. To estimate the FEN model, we developed a Riemann optimization algorithm founded on the conjugate gradient method. Furthermore, we provided theoretical guarantees for our algorithm, including its convergence to a local optimal solution and the upper bound between the optimal solution and the true value.

Our comprehensive evaluation, encompassing simulation studies and case studies involving real data from Hong Kong and Singapore subways, underscored the superiority of the FEN model over several baseline models for completing functional network data. Nonetheless, our research has also unveiled areas for future exploration. For instance, we observed that accurately estimating the core tensor of the test set by Equation (\ref{eq:test core}), a critical element for precise performance evaluation, becomes challenging when the proportion of missing observations is substantial. Therefore, a promising avenue for future research lies in developing algorithms dedicated to the accurate estimation of the core tensor for test set data.

\begin{small}
	\bibliography{sample_paper.bbl}

\begin{thebibliography}{10}
\providecommand{\url}[1]{#1}
\csname url@samestyle\endcsname
\providecommand{\newblock}{\relax}
\providecommand{\bibinfo}[2]{#2}
\providecommand{\BIBentrySTDinterwordspacing}{\spaceskip=0pt\relax}
\providecommand{\BIBentryALTinterwordstretchfactor}{4}
\providecommand{\BIBentryALTinterwordspacing}{\spaceskip=\fontdimen2\font plus
\BIBentryALTinterwordstretchfactor\fontdimen3\font minus \fontdimen4\font\relax}
\providecommand{\BIBforeignlanguage}[2]{{%
\expandafter\ifx\csname l@#1\endcsname\relax
\typeout{** WARNING: IEEEtran.bst: No hyphenation pattern has been}%
\typeout{** loaded for the language `#1'. Using the pattern for}%
\typeout{** the default language instead.}%
\else
\language=\csname l@#1\endcsname
\fi
#2}}
\providecommand{\BIBdecl}{\relax}
\BIBdecl

\bibitem{ou2016asymmetric}
M.~Ou, P.~Cui, J.~Pei, Z.~Zhang, and W.~Zhu, ``Asymmetric transitivity preserving graph embedding,'' in \emph{Proceedings of the 22nd ACM SIGKDD international conference on Knowledge discovery and data mining}, 2016, pp. 1105--1114.

\bibitem{goyal2018graph}
P.~Goyal and E.~Ferrara, ``Graph embedding techniques, applications, and performance: A survey,'' \emph{Knowledge-Based Systems}, vol. 151, pp. 78--94, 2018.

\bibitem{gallagher2023spectral}
I.~Gallagher, A.~Jones, A.~Bertiger, C.~E. Priebe, and P.~Rubin-Delanchy, ``Spectral embedding of weighted graphs,'' \emph{Journal of the American Statistical Association}, pp. 1--10, 2023.

\bibitem{kazemi2020representation}
S.~M. Kazemi, R.~Goel, K.~Jain, I.~Kobyzev, A.~Sethi, P.~Forsyth, and P.~Poupart, ``Representation learning for dynamic graphs: A survey,'' \emph{The Journal of Machine Learning Research}, vol.~21, no.~1, pp. 2648--2720, 2020.

\bibitem{zhong2015measuring}
C.~Zhong, E.~Manley, S.~M. Arisona, M.~Batty, and G.~Schmitt, ``Measuring variability of mobility patterns from multiday smart-card data,'' \emph{Journal of Computational Science}, vol.~9, pp. 125--130, 2015.

\bibitem{xu2016network}
Q.~Xu, B.~Mao, and Y.~Bai, ``Network structure of subway passenger flows,'' \emph{Journal of Statistical Mechanics: Theory and Experiment}, vol. 2016, no.~3, p. 033404, 2016.

\bibitem{VPCA}
P.~Nomikos and J.~F. MacGregor, ``Multivariate spc charts for monitoring batch processes,'' \emph{Technometrics}, vol.~37, no.~1, pp. 41--59, 1995.

\bibitem{MFPCA}
K.~Paynabar, C.~Zou, and P.~Qiu, ``A change-point approach for phase-i analysis in multivariate profile monitoring and diagnosis,'' \emph{Technometrics}, vol.~58, no.~2, pp. 191--204, 2016.

\bibitem{FPCA_ZC}
C.~Zhang, H.~Yan, S.~Lee, and J.~Shi, ``Weakly correlated profile monitoring based on sparse multi-channel functional principal component analysis,'' \emph{IISE Transactions}, vol.~50, no.~10, pp. 878--891, 2018.

\bibitem{foutz2010research}
N.~Z. Foutz and W.~Jank, ``Research note—prerelease demand forecasting for motion pictures using functional shape analysis of virtual stock markets,'' \emph{Marketing Science}, vol.~29, no.~3, pp. 568--579, 2010.

\bibitem{bali2011robust}
J.~L. Bali, G.~Boente, D.~E. Tyler, and J.-L. Wang, ``Robust functional principal components: A projection-pursuit approach,'' \emph{The Annals of Statistics}, vol.~39, no.~6, pp. 2852--2882, 2011.

\bibitem{SIFPCA}
F.~Yao, H.-G. M{\"u}ller, and J.-L. Wang, ``Functional data analysis for sparse longitudinal data,'' \emph{Journal of the American statistical association}, vol. 100, no. 470, pp. 577--590, 2005.

\bibitem{SIFPCAImproved}
S.~Greven, C.~Crainiceanu, B.~Caffo, and D.~Reich, ``Longitudinal functional principal component analysis,'' in \emph{Recent Advances in Functional Data Analysis and Related Topics}.\hskip 1em plus 0.5em minus 0.4em\relax Springer, 2011, pp. 149--154.

\bibitem{rice2001nonparametric}
J.~A. Rice and C.~O. Wu, ``Nonparametric mixed effects models for unequally sampled noisy curves,'' \emph{Biometrics}, vol.~57, no.~1, pp. 253--259, 2001.

\bibitem{FPCAReview}
H.~L. Shang, ``A survey of functional principal component analysis,'' \emph{AStA Advances in Statistical Analysis}, vol.~98, no.~2, pp. 121--142, 2014.

\bibitem{james2000principal}
G.~M. James, T.~J. Hastie, and C.~A. Sugar, ``Principal component models for sparse functional data,'' \emph{Biometrika}, vol.~87, no.~3, pp. 587--602, 2000.

\bibitem{james2003clustering}
G.~M. James and C.~A. Sugar, ``Clustering for sparsely sampled functional data,'' \emph{Journal of the American Statistical Association}, vol.~98, no. 462, pp. 397--408, 2003.

\bibitem{zhou2008joint}
L.~Zhou, J.~Z. Huang, and R.~J. Carroll, ``Joint modelling of paired sparse functional data using principal components,'' \emph{Biometrika}, vol.~95, no.~3, pp. 601--619, 2008.

\bibitem{DynNet_MLEBasic}
X.~Qiao, S.~Guo, and G.~M. James, ``Functional graphical models,'' \emph{Journal of the American Statistical Association}, vol. 114, no. 525, pp. 211--222, 2019.

\bibitem{DynNet_DoubleFun}
X.~Qiao, C.~Qian, G.~M. James, and S.~Guo, ``Doubly functional graphical models in high dimensions,'' \emph{Biometrika}, vol. 107, no.~2, pp. 415--431, 2020.

\bibitem{DynNet_BayesEdge}
H.~Zhu, N.~Strawn, and D.~B. Dunson, ``Bayesian graphical models for multivariate functional data,'' 2016.

\bibitem{chun2013joint}
H.~Chun, M.~Chen, B.~Li, and H.~Zhao, ``Joint conditional gaussian graphical models with multiple sources of genomic data,'' \emph{Frontiers in genetics}, vol.~4, p. 294, 2013.

\bibitem{danaher2014joint}
P.~Danaher, P.~Wang, and D.~M. Witten, ``The joint graphical lasso for inverse covariance estimation across multiple classes,'' \emph{Journal of the Royal Statistical Society: Series B (Statistical Methodology)}, vol.~76, no.~2, pp. 373--397, 2014.

\bibitem{tsai2023latent}
K.~Tsai, B.~Zhao, S.~Koyejo, and M.~Kolar, ``Latent multimodal functional graphical model estimation,'' \emph{Journal of the American Statistical Association}, no. just-accepted, pp. 1--25, 2023.

\bibitem{li2018nonparametric}
B.~Li and E.~Solea, ``A nonparametric graphical model for functional data with application to brain networks based on fmri,'' \emph{Journal of the American Statistical Association}, vol. 113, no. 524, pp. 1637--1655, 2018.

\bibitem{solea2022nonparametric}
E.~Solea and H.~Dette, ``Nonparametric and high-dimensional functional graphical models,'' \emph{Electronic Journal of Statistics}, vol.~16, no.~2, pp. 6175--6231, 2022.

\bibitem{matias2017statistical}
C.~Matias and V.~Miele, ``Statistical clustering of temporal networks through a dynamic stochastic block model,'' \emph{Journal of the Royal Statistical Society: Series B (Statistical Methodology)}, vol.~79, no.~4, pp. 1119--1141, 2017.

\bibitem{lee2020model}
K.~H. Lee, L.~Xue, and D.~R. Hunter, ``Model-based clustering of time-evolving networks through temporal exponential-family random graph models,'' \emph{Journal of multivariate analysis}, vol. 175, p. 104540, 2020.

\bibitem{MTR}
P.~D. Hoff, ``Multilinear tensor regression for longitudinal relational data,'' \emph{The annals of applied statistics}, vol.~9, no.~3, p. 1169, 2015.

\bibitem{heaukulani2013dynamic}
C.~Heaukulani and Z.~Ghahramani, ``Dynamic probabilistic models for latent feature propagation in social networks,'' in \emph{International Conference on Machine Learning}.\hskip 1em plus 0.5em minus 0.4em\relax PMLR, 2013, pp. 275--283.

\bibitem{sewell2017latent}
D.~K. Sewell and Y.~Chen, ``Latent space approaches to community detection in dynamic networks,'' \emph{Bayesian analysis}, vol.~12, no.~2, pp. 351--377, 2017.

\bibitem{hoff2002latent}
P.~D. Hoff, A.~E. Raftery, and M.~S. Handcock, ``Latent space approaches to social network analysis,'' \emph{Journal of the american Statistical association}, vol.~97, no. 460, pp. 1090--1098, 2002.

\bibitem{loyal2023eigenmodel}
J.~D. Loyal and Y.~Chen, ``An eigenmodel for dynamic multilayer networks,'' \emph{Journal of Machine Learning Research}, vol.~24, no. 128, pp. 1--69, 2023.

\bibitem{robinson2012detecting}
L.~F. Robinson and C.~E. Priebe, ``Detecting time-dependent structure in network data via a new class of latent process models,'' \emph{arXiv preprint arXiv:1212.3587}, 2012.

\bibitem{zhang2023multi}
C.~Zhang, B.~Zheng, and F.~Tsung, ``Multi-view metro station clustering based on passenger flows: a functional data-edged network community detection approach,'' \emph{Data Mining and Knowledge Discovery}, vol.~37, no.~3, pp. 1154--1208, 2023.

\bibitem{papalexakis2012parcube}
E.~E. Papalexakis, C.~Faloutsos, and N.~D. Sidiropoulos, ``Parcube: Sparse parallelizable tensor decompositions,'' in \emph{Machine Learning and Knowledge Discovery in Databases: European Conference, ECML PKDD 2012, Bristol, UK, September 24-28, 2012. Proceedings, Part I 23}.\hskip 1em plus 0.5em minus 0.4em\relax Springer, 2012, pp. 521--536.

\bibitem{xiong2010temporal}
L.~Xiong, X.~Chen, T.-K. Huang, J.~Schneider, and J.~G. Carbonell, ``Temporal collaborative filtering with bayesian probabilistic tensor factorization,'' in \emph{Proceedings of the 2010 SIAM international conference on data mining}.\hskip 1em plus 0.5em minus 0.4em\relax SIAM, 2010, pp. 211--222.

\bibitem{yu2017link}
W.~Yu, W.~Cheng, C.~C. Aggarwal, H.~Chen, and W.~Wang, ``Link prediction with spatial and temporal consistency in dynamic networks.'' in \emph{IJCAI}, 2017, pp. 3343--3349.

\bibitem{2DContinueDecomposition}
M.~J. McNeil, L.~Zhang, and P.~Bogdanov, ``Temporal graph signal decomposition,'' in \emph{Proceedings of the 27th ACM SIGKDD Conference on Knowledge Discovery \& Data Mining}, 2021, pp. 1191--1201.

\bibitem{CPDecSmooth}
T.~Yokota, Q.~Zhao, and A.~Cichocki, ``Smooth parafac decomposition for tensor completion,'' \emph{IEEE Transactions on Signal Processing}, vol.~64, no.~20, pp. 5423--5436, 2016.

\bibitem{han2023guaranteed}
R.~Han, P.~Shi, and A.~R. Zhang, ``Guaranteed functional tensor singular value decomposition,'' \emph{Journal of the American Statistical Association}, pp. 1--13, 2023.

\bibitem{wang2014low}
H.~Wang, F.~Nie, and H.~Huang, ``Low-rank tensor completion with spatio-temporal consistency,'' in \emph{Proceedings of the AAAI Conference on Artificial Intelligence}, vol.~28, no.~1, 2014.

\bibitem{zhou2023partially}
J.~Zhou, W.~W. Sun, J.~Zhang, and L.~Li, ``Partially observed dynamic tensor response regression,'' \emph{Journal of the American Statistical Association}, vol. 118, no. 541, pp. 424--439, 2023.

\bibitem{chen2021bayesian}
X.~Chen and L.~Sun, ``Bayesian temporal factorization for multidimensional time series prediction,'' \emph{IEEE Transactions on Pattern Analysis and Machine Intelligence}, vol.~44, no.~9, pp. 4659--4673, 2021.

\bibitem{mcneil2021temporal}
M.~J. McNeil, L.~Zhang, and P.~Bogdanov, ``Temporal graph signal decomposition,'' in \emph{Proceedings of the 27th ACM SIGKDD Conference on Knowledge Discovery \& Data Mining}, 2021, pp. 1191--1201.

\bibitem{imaizumi2017tensor}
M.~Imaizumi and K.~Hayashi, ``Tensor decomposition with smoothness,'' in \emph{International Conference on Machine Learning}.\hskip 1em plus 0.5em minus 0.4em\relax PMLR, 2017, pp. 1597--1606.

\bibitem{MLDS}
M.~Rogers, L.~Li, and S.~J. Russell, ``Multilinear dynamical systems for tensor time series,'' \emph{Advances in Neural Information Processing Systems}, vol.~26, 2013.

\bibitem{tTNN}
W.~Hu, D.~Tao, W.~Zhang, Y.~Xie, and Y.~Yang, ``A new low-rank tensor model for video completion,'' \emph{arXiv preprint arXiv:1509.02027}, 2015.

\end{thebibliography}
        \bibliographystyle{IEEEtran}
\end{small}

\begin{IEEEbiography}[{\includegraphics[width=1.00in,height=1.25in,clip,keepaspectratio]{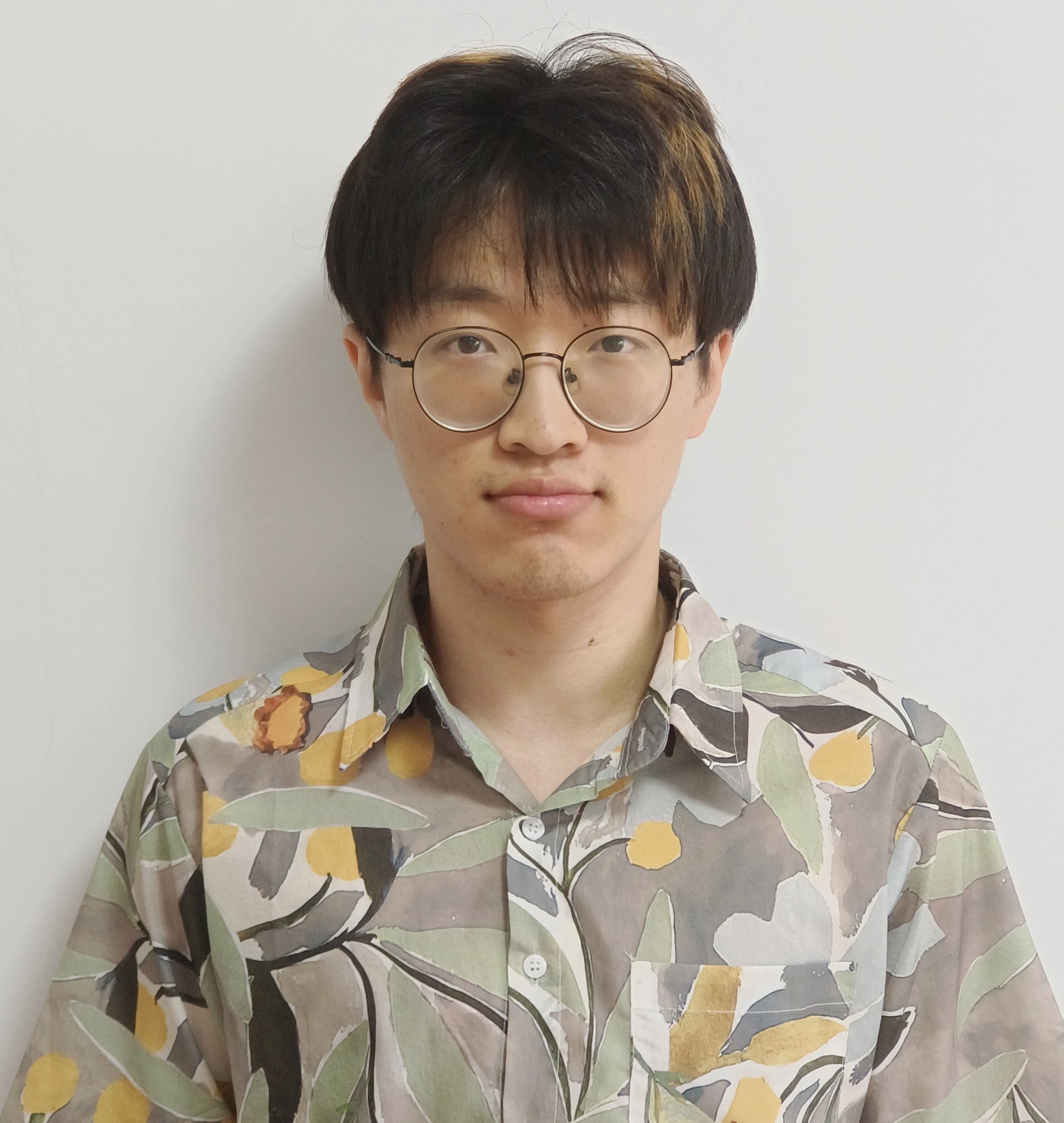}}]{Haijie Xu} received the B.S. degree from the Department of  Industrial Engineering, Tsinghua University, Beijing, China, in 2022. 
He is currently a Ph.D. candidate with the Department of Industrial Engineering, Tsinghua University, Beijing, China. His research interests include sequential change detection, functional data analysis.
\end{IEEEbiography}

\begin{IEEEbiography}[{\includegraphics[width=1.00in,height=1.25in,clip,keepaspectratio]{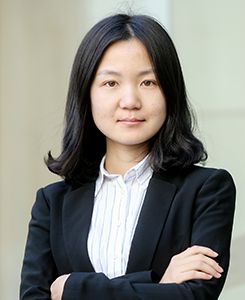}}]{Chen Zhang} received the B.S. degree in electronic science and technology (optics) from Tianjin University, Tianjin, China, in 2012, and the Ph.D. degree in industrial systems engineering and management from the National University of Singapore, Singapore, in 2017. 
From 2017 to 2018, she was a Research Fellow with School of Information Systems, Singapore Management University, Singapore. She is currently an Associate Professor at the Department of Industrial Engineering, Tsinghua University, Beijing, China. Her research interests include statistics and machine learning with applications in industrial data analysis and medical data analysis. 
\end{IEEEbiography}

\end{document}